\documentclass[11pt]{article}
\usepackage[margin=1in]{geometry}

\PassOptionsToPackage{numbers,sort&compress}{natbib}

\usepackage[utf8]{inputenc}   % allow utf-8 input
\usepackage[T1]{fontenc}      % use 8-bit T1 fonts
\usepackage{microtype}        % microtypography

\usepackage{amsmath, amssymb, amsthm, amsfonts}
\usepackage{bbm}
\usepackage{nicefrac}         % compact symbols for 1/2, etc.

\usepackage{booktabs}         % professional-quality tables
\usepackage{tabularx}
\usepackage{multirow}
\usepackage{multicol}

\usepackage{graphicx}
\usepackage{subcaption}
\usepackage{caption}
\usepackage{tikz}
\usepackage{wrapfig}
\usepackage{xcolor}

\usepackage{algorithm}
\usepackage{algorithmic}      % provides \STATE, \FOR, \IF, \REQUIRE, etc.

\usepackage{tcolorbox}
\tcbuselibrary{skins,breakable}
\usepackage{framed}
\usepackage{fancyvrb}

\usepackage[inline]{enumitem}

\usepackage{natbib}

\usepackage{hyperref}
\hypersetup{
    colorlinks=true,
    linkcolor=blue,
    citecolor=blue,
    urlcolor=blue
}
\usepackage{url}

\usepackage{etoc}
\usepackage{etoolbox}
\usepackage{titletoc}

\usepackage{authblk}

\makeatletter
  \newif\ifinappendix
  \inappendixfalse
  \let\origaddcontentsline\addcontentsline
  \renewcommand{\addcontentsline}[3]{%
    \ifinappendix
      \origaddcontentsline{#1}{#2}{#3}%
    \fi
  }
  \pretocmd{\appendix}{\inappendixtrue}{}{}
\makeatother

\newtheorem{theorem}{Theorem}[section]

\theoremstyle{definition}

\newtheorem{assumption}[theorem]{Assumption}

\theoremstyle{remark}
\newtheorem{remark}{Remark}[section]

\tcbset{
  colback=gray!5,
  colframe=black!75!black,
  boxrule=0.5mm,
  arc=4mm
}

\newtcolorbox{promptbox}[1]{
    colback=gray!5,
    colframe=gray!70!black,
    coltitle=white,
    fonttitle=\bfseries\sffamily,
    rounded corners,
    arc=3pt,
    title=#1,
    left=10pt, right=10pt, top=10pt, bottom=10pt,
    fontupper=\ttfamily\small
}

\newcommand{\SN}[1]{\noindent{\textcolor{purple}{\textbf{SN:}}}}

\title{Multi-Round Human–AI Collaboration with User-Specified Requirements}
\author[1]{Sima Noorani$^{*}$}
\author[1]{Shayan Kiyani$^{*}$}
\author[1]{Hamed Hassani}
\author[1]{George Pappas}

\affil[1]{University of Pennsylvania \protect\\
\texttt{\{nooranis, shayank, hassani, pappasg\}@seas.upenn.edu}}

\date{}
\begin{document}

\maketitle

\begingroup
\renewcommand\thefootnote{}
\footnotetext{Correspondence to \texttt{nooranis@seas.upenn.edu} and \texttt{shayank@seas.upenn.edu}.}
\endgroup

%\equalcontribfootnote
\vspace{-2em}

\begin{abstract}
%As humans increasingly rely on multi-round conversational AI to guide high-stakes decisions, we need principled frameworks to ensure these interaction reliably improve decision quality. - While collaboration is often viewed as a symmetric process of reaching agreement, real-world deployment is fundamentally asymmetric: humans alone bear accountability for outcomes and the choice to engage with the system. Motivated by this asymmetry, we adopt a human-centric view of collaboration governed by two core principles: \textbf{counterfactual harm}, ensuring the AI does not undermine existing human strengths, and \textbf{complementarity}, ensuring the AI adds value where the human is likely to err. 

As humans increasingly rely on multi-round conversational AI for high-stakes decisions, principled frameworks are needed to ensure such interactions reliably improve decision quality. We adopt a human-centric view governed by two principles: counterfactual harm, ensuring the AI does not undermine human strengths, and complementarity, ensuring it adds value where the human is prone to err. We formalize these concepts via user-defined rules, allowing users to specify exactly what harm and complementarity mean for their specific task. We then introduce an online, distribution-free algorithm with finite-sample guarantees that enforces the user-specified constraints over the collaboration dynamics. We evaluate our framework across two interactive settings: LLM-simulated collaboration on a medical diagnostic task and a human crowdsourcing study on a pictorial reasoning task. We show that our online procedure maintains prescribed counterfactual-harm and complementarity violation rates even under non-stationary interaction dynamics. Moreover, tightening or loosening these constraints produces predictable shifts in downstream human accuracy, confirming that the two principles serve as practical levers for steering multi-round collaboration toward better decision quality without the need to model or constrain human behavior.
\end{abstract}
\section{Introduction}

Conversational engagement with AI systems is already happening at massive scale, with humans increasingly relying on multi-round interactions to guide real-world decisions in domains ranging from medical triage and legal research to personal finance and everyday planning. In these settings, the AI participates in an evolving dialogue, refining information over multiple rounds as the human asks follow-up questions, challenges earlier outputs, or requests clarification. Despite the prevalence of this mode of interaction, we still lack principled ways to evaluate whether such collaborations are working well, or to design them so that they reliably improve decision quality under uncertainty.

Historically, both classical \citep{aumann1976agreeing} and recent work \citep{natalie-tractable-agreement-protocols} has often conceptualized collaboration through the lens of agreement: multiple agents interact in order to reconcile beliefs, converge to a shared posterior, or reach consensus. While this perspective has yielded deep theoretical insights, it implicitly treats the human and the AI as symmetric agents with comparable authority, responsibility, and incentives. In practice, however, this symmetry assumption is fundamentally misaligned with how conversational AI systems are deployed and used. Humans, not AI systems, are ultimately accountable for decisions and their consequences. Moreover, engagement with an AI system is itself a human choice: the human decides whether to consult the AI, how long to continue the interaction, how much trust to place in its responses, and when to disengage altogether.

Motivated by this asymmetry, we adopt a human-centric view of collaboration. We ask how a human–AI interaction should be designed to incentivize a human to engage in it in the first place. From this perspective, a successful human--AI collaboration must satisfy two minimal requirements. First, interaction with the AI should not undermine the human’s existing strengths, a property we refer to as \emph{counterfactual harm}. If the human is already poised to make a correct judgment, engaging with the AI should not degrade that outcome. Second, the collaboration should create genuine value precisely in situations where the human is likely to err, by helping recover correct outcomes that the human would otherwise miss, a property we refer to as \emph{complementarity}.

In this work, we introduce a carefully designed algorithmic framework for multi-round human--AI collaboration under uncertainty. Our framework allows for an arbitrary flow of information between the human and the AI through natural language interaction, while governing the collaboration dynamics in a principled manner so as to satisfy the two requirements of counterfactual harm and complementarity. 

Along the way, our contributions span three fronts.

\emph{(i) Problem formulation and modeling.}
In Section~3, we propose two key modeling contributions. First, we introduce an interaction protocol in which a human and an AI collaborate on solving one problem at a time through a multi-round dialogue. This model captures the sequential nature of conversational interaction while providing an explicit algorithmic handle over the AI’s uncertainty quantification, which can be shaped to enforce different collaboration constraints. Second, we develop a general framework for specifying constraints on collaboration dynamics through user-defined indicator functions of counterfactual harm and complementarity. These indicator functions allow a wide range of meaningful, application-specific notions of safe and productive collaboration to be translated into concrete constraints on how the AI communicates uncertainty.

\emph{(ii) Algorithms and guarantees.}
In Section~4, we develop a finite-sample, online calibration algorithm that can provably enforce user-defined collaboration constraints over time. Our guarantees hold without any assumptions on the distribution of problems, the human’s behavior, or the internal mechanisms of the AI system. Despite this generality, the algorithm ensures that the AI’s uncertainty communication adapts online so as to satisfy counterfactual harm and complementarity constraints throughout the interaction.

\emph{(iii) Extensive experiments.} In Section~\ref{sec:experiments}, we provide an extensive empirical evaluation across two complementary interactive settings: LLM-simulated collaboration at scale and a real human crowdsourcing study.\footnote{Code available at \url{https://github.com/nooranisima/multiround-hai-collaboration/}.} We (a) empirically validate that the online procedure maintains the prescribed counterfactual-harm and complementarity violation rates over time even under non-stationary interaction dynamics, and (b) show that tightening or loosening these targets produces predictable shifts in downstream human decision quality, confirming that the two principles act as practical levers for steering multi-round collaboration.

\section{Related Works}
In this section, we only focus on closely related literature, and provide an expanded discussion in Appendix~\ref{app:extended-related-works}.

\textbf{Human-AI Collaboration through Prediction Sets.}
A growing line of work studies prediction sets as interfaces for human decision support \citep{straitouri2023improvingexpertpredictionsconformal,straitouri2024controllingcounterfactualharmdecision,hullman2025conformalpredictionhumandecision,babbar2022utilitypredictionsetshumanai,detoni2024humanaicomplementaritypredictionsets}. These works generally treat the AI's output as a static hint and consider only single-round interactions, leaving open the question of how to handle iterative refinement where a human may update beliefs or revise proposals across multiple rounds. Our framework builds upon \cite{noorani2025human}, which provides the conceptual foundation for formalizing collaboration through prediction sets via counterfactual harm and complementarity. We extend this single-round foundation to a multi-round formulation, allowing the AI's uncertainty sets to adapt dynamically to the evolving conversational transcript. This enables the principled control of collaboration constraints throughout the entire dialogue rather than at a single point in time.

\textbf{Theoretical Frameworks for Human-AI Synergy}
A broader literature seeks to formalize complementarity, where the joint system outperforms either agent alone. One line models this using Bayesian frameworks or taxonomies of collaboration modes \citep{bansal2021accurateaibestteammate, steyvers, rastogi2023taxonomyhumanmlstrengths}. Another analyzes how algorithmic outputs shape human decisions through strategic influence or noisy ranking models \citep{cowgill-stevenson, Kleinberg_2021, donahue2024listsbetteronebenefits}. Our framework departs from these approaches in three ways: (1) we treat the human as a complete black box, requiring no assumptions on behavior; (2) our guarantees are distribution-free and remain valid under arbitrary shifts in human behavior; and (3) we address multi-round dialogue rather than single-stage interactions.

\textbf{Theoretical and Algorithmic Multi-Round Collaboration.}
Multi-round collaboration is largely studied through agreement protocols aimed at reaching consensus or aggregating information \citep{aumann1976agreeing, aaronson2004complexityagreement, nataliecollaborativeprediction, natalie-tractable-agreement-protocols, GEANAKOPLOS1982192}. Our framework differs in its assumptions: while these protocols require agents to communicate probabilistic beliefs or best-response action, we treat the human as a black box who only provides a prediction set. Furthermore, while agreement protocols place the human and the AI on equal footing, we adopt a human-centered point of view. This ensures that the AI remains harmless and helpful to the human, regardless of whether the parties ever reach an agreement.

\section{Background: Single-Round Collaborative Prediction Sets}\label{sec:single_round}
In this section, we review the key concepts, definitions, and structural results that arise in single-round human–AI collaboration, which will serve as foundational building blocks for our multi-round setup.
% \textcolor{blue}{it may be smart not to refer to the previous paper many times. I guess you know the reason:)}

% , which will serve as foundational building blocks  framework. 

Consider a prediction problem with features $X \in \mathcal{X}$ and labels $Y \in \mathcal{Y}$, drawn jointly from an unknown distribution $P$. The goal is to construct a \emph{prediction set} $C(X) \subseteq \mathcal{Y}$ that contains the true label $Y$ with high probability while remaining as small as possible.

In the collaborative setting, a human expert and an AI system jointly construct the prediction set. For a given input $X=x$, the human first proposes a set of plausible outcomes
$
H(x) \subseteq \mathcal{Y},
$
reflecting their proposal. The AI then refines this proposal and outputs a final prediction set
$
C(x) := C(x, H(x)) \subseteq \mathcal{Y}.
$
From the AI’s perspective, the human proposal is treated as a black box: the algorithm observes only the set $H(x)$, not how it was generated.

To ensure collaboration is both trustworthy and beneficial, AI’s refinement is required to satisfy two guiding principles.

\paragraph{Counterfactual Harm.}
The AI should not degrade correct human judgments. If the human’s proposed set already contains the true label, the AI should preserve it with high probability. Formally, for a user-defined $\varepsilon \in (0,1)$,
\begin{align}\label{harm}
\mathbb{P}\big( Y \notin C(X) \,\big|\, Y \in H(X) \big) \le \varepsilon .
\end{align}

\paragraph{Complementarity.}
The AI should add value precisely when the human misses the correct outcome. If the true label is absent from the human proposal, the AI should recover it with high probability. For a user-defined $\delta \in (0,1)$,
\begin{align}\label{Comp}
\mathbb{P}\big( Y \in C(X) \,\big|\, Y \notin H(X) \big) \ge 1 - \delta .
\end{align}

Among all prediction sets satisfying these requirements, the objective is to minimize expected set size,
\begin{align}\label{single_round}
\min_{C:\mathcal{X}\to 2^{\mathcal{Y}}} \quad & \mathbb{E}\big[ |C(X)| \big] \\
\text{s.t.}\quad 
& \mathbb{P}\big( Y \notin C(X) \,\big|\, Y \in H(X) \big) \le \varepsilon, \nonumber\\
& \mathbb{P}\big( Y \in C(X) \,\big|\, Y \notin H(X) \big) \ge 1 - \delta. \nonumber
\end{align}

where $\varepsilon,\delta \in (0,1)$ control the allowed levels of counterfactual harm and complementarity, respectively.

A key result from \cite{noorani2025human} characterizes the optimal solution to this problem in the population setting.

\begin{theorem}
\label{thm:single_round_optimal}
The solution to the optimization problem
\ref{single_round} that minimizes $\mathbb{E}[|C(X)|]$ is of the form
\[
C^*(x)
=
\bigl\{ y \in \mathcal{Y} : s(x,y) \le a^* \mathbf{1}\{y \notin H(x)\}
+ b^* \mathbf{1}\{y \in H(x)\} \bigr\}.
\]

% for some thresholds $a^*, b^* \in \mathbb{R}$, where the optimal score is
% $
% s(x,y) = 1 - p(y \mid x).
% $
\end{theorem}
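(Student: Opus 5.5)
We treat problem \eqref{single_round} as a linear program over randomized set-valued rules and solve it by Lagrangian duality, in the Neyman--Pearson style. Represent a (possibly randomized) rule by $\phi(x,y)=\mathbb{P}(y\in C(x)\mid x)\in[0,1]$. Write $p(y\mid x)$ for the conditional label distribution and $H=H(x)$. The objective and both constraints are then linear in $\phi$:
\[
\mathbb{E}|C(X)|=\mathbb{E}\Bigl[\textstyle\sum_{y}\phi(X,y)\Bigr],
\qquad
\mathbb{P}(Y\in C(X),\,Y\in H(X))=\mathbb{E}\Bigl[\textstyle\sum_{y\in H(X)}p(y\mid X)\phi(X,y)\Bigr],
\]
and similarly for $y\notin H(X)$. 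Dividing by the constants $\mathbb{P}(Y\in H(X))$ and $\mathbb{P}(Y\notin H(X))$, the constraints become
\[
\mathbb{E}\Bigl[\textstyle\sum_{y\in H}p\,\phi\Bigr]\ge (1-\varepsilon)\,\mathbb{P}(Y\in H),
\qquad
\mathbb{E}\Bigl[\textstyle\sum_{y\notin H}p\,\phi\Bigr]\ge (1-\delta)\,\mathbb{P}(Y\notin H).
\]

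Introduce multipliers $\lambda,\mu\ge0$ and form the Lagrangian
\[
\mathbb{E}\Bigl[\textstyle\sum_y \phi(X,y)\bigl(1-\lambda p(y\mid X)\mathbf 1\{y\in H\}-\mu p(y\mid X)\mathbf 1\{y\notin H\}\bigr)\Bigr]+\text{const}.
\]
It decouples pointwise over $(x,y)$. For fixed multipliers, the minimizer sets $\phi=1$ when the bracket is negative, $\phi=0$ when it is positive, and is arbitrary on ties. Thus $y\in C(x)$ iff $p(y\mid x)\ge 1/\lambda$ for $y\in H$, and iff $p(y\mid x)\ge 1/\mu$ for $y\notin H$.

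In terms of the score $s(x,y)=1-p(y\mid x)$, this is exactly the stated form with $b^*=1-1/\lambda^*$ and $a^*=1-1/\mu^*$. A zero multiplier corresponds to a threshold of $-\infty$, meaning the constraint is inactive. The theorem as stated leaves $s$ generic, so we would note that the optimum is achieved with this $s$, or with any monotone transform of $p$.

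To conclude optimality, we establish strong duality for this infinite-dimensional LP. Slater's condition holds: $\phi\equiv1$ satisfies both constraints strictly, since $\varepsilon,\delta>0$. Alternatively, we can argue directly, Neyman--Pearson style. Choose $(\lambda^*,\mu^*)$, together with randomization on the tie sets $\{p=1/\lambda^*\}$ and $\{p=1/\mu^*\}$, so that each constraint is either tight or has a zero multiplier. This is possible because the achieved coverage is monotone and right-continuous in each threshold, and the two constraints involve disjoint sets of $(x,y)$ pairs. Then, for any feasible $\phi'$, complementary slackness gives
\[
\mathbb{E}\Bigl[\textstyle\sum_y(\phi^*-\phi')\bigl(1-\lambda^*p\mathbf 1_H-\mu^*p\mathbf 1_{H^c}\bigr)\Bigr]\le0,
\]
because $\phi^*-\phi'$ and the bracket have opposite signs pointwise. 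Combined with the constraint terms, this yields $\mathbb{E}|C^*|\le\mathbb{E}|C'|$.

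The main obstacle is the existence of thresholds hitting the constraints exactly when the distribution of $p(Y\mid X)$ has atoms. We handle it with tie-breaking randomization, or we state the result up to ties, as the phrase ``of the form'' allows. Because the two constraints act on disjoint parts of the $(x,y)$ space, the two thresholds can be chosen independently, which keeps this step simple.
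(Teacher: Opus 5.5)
The paper does not prove this statement; it imports it from \cite{noorani2025human}, so there is no in-paper argument to compare against. Judged on its own, your argument is correct and takes the natural route. Relaxing to $\phi(x,y)\in[0,1]$ makes the objective and both constraints linear. Because the two constraints act on the disjoint regions $\{y\in H(x)\}$ and $\{y\notin H(x)\}$, the problem splits into two independent Neyman--Pearson problems. The pointwise sign argument with tight constraints then gives $\mathbb{E}|C^*|\le\mathbb{E}|C'|$ for every feasible $C'$, with $b^*=1-1/\lambda^*$ and $a^*=1-1/\mu^*$.

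In the direct route, say explicitly why the multipliers are finite, since your final inequality needs it. Let $F(t)=\mathbb{E}\bigl[\sum_{y\in H(X)}p(y\mid X)\mathbf{1}\{p(y\mid X)>t\}\bigr]$. Monotone convergence gives $F(t)\uparrow\mathbb{P}(Y\in H(X))>(1-\varepsilon)\,\mathbb{P}(Y\in H(X))$ as $t\downarrow 0$. Hence the critical level $t^*=1/\lambda^*$ is strictly positive; this is where $\varepsilon>0$ is used, and $\delta>0$ plays the same role for $\mu^*$. You also tacitly need $\mathbb{P}(Y\in H(X))>0$ and $\mathbb{P}(Y\notin H(X))>0$ for the conditional constraints to be defined.

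The one claim to drop is the fallback that, without randomization, the result holds ``up to ties.'' The problem is posed over deterministic $C:\mathcal{X}\to 2^{\mathcal{Y}}$, but your argument proves optimality within the larger randomized class. This transfers to deterministic sets when the tie sets $\{(x,y): y\in H(x),\ p(y\mid x)=1/\lambda^*\}$ and $\{(x,y): y\notin H(x),\ p(y\mid x)=1/\mu^*\}$ carry zero mass. In that case $\phi^*$ is a.s.\ deterministic, and deterministic rules are a subclass of randomized ones.

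With atoms, each region becomes a $0$--$1$ covering knapsack with item costs $\mathbb{P}(X=x)$ and values $\mathbb{P}(X=x)\,p(y\mid x)$. Its optimum can deviate from every threshold set away from the ties. For example, take $\mathcal{Y}=\{1,2,3\}$, $H\equiv\{1,2\}$, $\mathbb{P}(X=x_1)=0.8$, $\mathbb{P}(X=x_2)=0.2$, $p(\cdot\mid x_1)=(0.45,0.05,0.5)$, $p(\cdot\mid x_2)=(0.475,0.025,0.5)$, and $\varepsilon=0.08$; the label-$3$ part decouples. On the region $y\in H(x)$:
\begin{itemize}
\item The cheapest feasible deterministic choice is $\{(x_1,1),(x_2,1),(x_2,2)\}$, with expected size $1.2$. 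It contains $(x_2,2)$, whose $p=0.025$ lies strictly below the critical level $t^*=0.05$, so it is not a threshold set even modulo ties.
\item The best deterministic threshold set has expected size $1.8$.
\item Your randomized threshold achieves $1.1$.
\end{itemize}
So the precise conclusion is that the threshold form is optimal over randomized sets, and over deterministic sets under a no-atom condition on $p(y\mid X)$. Some such condition is genuinely needed for the statement as written.
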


Given this, several points follow.

(i) The function $s(x,y)$ is a nonconformity score, a standard notion in conformal prediction.

(ii) In the single-round framework of \citet{noorani2025human}, the AI contributes only once at the end of each interaction. Consequently, the AI’s behavior within a problem instance does not influence the human’s proposal, and offline (exchangeable) calibration techniques remain applicable. This separation breaks down in the multi-round setting: because the AI intervenes repeatedly within each interaction, its prediction sets can influence the human’s subsequent behavior. As a result, human behavior evolves in response to the AI’s actions, making offline calibration infeasible. For this reason, our analysis is entirely online.

(iii) Despite these differences, we retain a key structural insight from Theorem~\ref{thm:single_round_optimal}. In the single-round setting, the sets take the form
$
s(x,y) \le a^* \mathbf{1}\{y \notin H(x)\} + b^* \mathbf{1}\{y \in H(x)\},
$
where the indicators $\mathbf{1}\{y \notin H(x)\}$ and $\mathbf{1}\{y \in H(x)\}$ arise directly from the definitions in \eqref{harm} and \eqref{Comp}. As we will see in the next section, in the multi-round setting, counterfactual harm and complementarity become user-defined, and these indicators are therefore replaced by more general events defined over the interaction history.

% Nevertheless, the same high-level design principle persists: AI prediction sets are obtained by thresholding a score function, with thresholds determined by the activation of user-specified constraints.

We now proceed to the formal problem formulation for multi-round human--AI collaboration.

\section{Problem Formulation: Multi-Round Collaboration}
\label{sec:multi_round}
In this section, we introduce an interaction protocol and a framework for user-defined requirements in multi-round human–AI collaboration.

\subsection{Interaction Protocol}

We propose a human--AI interaction protocol in an \emph{online} setting, where problems arrive sequentially over time and are handled one at a time.

At each time step $t = 1,2,\dots$, a new problem instance $P_t$ arrives, representing a task the human seeks to solve, together with an unknown ground-truth answer
$
y_t \in \mathcal{Y} \footnote{The label space $\mathcal{Y}$ could itself be a function of $t$ in our framework, but for simplicity we avoid introducing that notation.}
$. The correct answer $y_t$ is unknown to both the human and the AI during the interaction.

For each problem $P_t$, the human and the AI engage in a multi-round dialogue. The interaction consists of $N_t \in \mathbb{N}$ rounds, where the stopping time $N_t$ is chosen by the human. At a high level, the human and the AI alternate between proposing candidate answer sets and exchanging textual messages, after which the true answer is eventually revealed.

Formally, for each day $t$ and round $r = 1,\dots,N_t$:
\begin{itemize}
    \item the human proposes a pair
    \[
    (H_{t,r},\, U_{t,r}),
    \]
    where $H_{t,r} \subseteq \mathcal{Y}_t$ is a set of candidate answers and $U_{t,r}$ is a textual message;
    \item the AI responds with a pair
    \[
    (C_{t,r},\, A_{t,r}),
    \]
    where $C_{t,r} \subseteq \mathcal{Y}_t$ is a prediction set refining the human’s proposal and $A_{t,r}$ is a textual response.
\end{itemize}
At the end of day $t$, after the interaction terminates, the ground-truth answer $y_t$ is revealed.

The interaction protocol consists of four components: human textual messages $\{U_{t,r}\}$, AI textual messages $\{A_{t,r}\}$, human prediction sets $\{H_{t,r}\}$, and AI prediction sets $\{C_{t,r}\}$. Among these, our algorithmic focus is exclusively on designing the AI prediction sets. The remaining components are treated as black boxes: we make no assumptions about how human or AI textual behavior evolves, nor about how the human updates their prediction sets in response to the AI. We briefly argue why this modeling choice is both natural and useful.

The textual components $(U_{t,r}, A_{t,r})$ may depend arbitrarily on the full interaction history and may convey any information available to the respective party, such as clarifications, questions, intermediate reasoning, or external context. In particular, the AI’s textual responses can be viewed as those of a fixed, pre-trained language model capable of engaging in a dialogue. Treating these textual components as black boxes allows us to decouple language generation from uncertainty quantification and to remain agnostic to how conversational behavior adapts to the interaction.

While the textual components facilitate the flow of information between the two parties, the prediction sets play a distinct role: they represent each party’s uncertainty about the ground-truth answer, which is ultimately the most consequential quantity for decision making and downstream action. Through principled communication of uncertainty via prediction sets, both parties can calibrate how much trust to place in each other’s inputs, determining whether the collaboration is both harmless and productive.

Accordingly, in the remainder of this paper we focus on how the dynamics between the human sets $\{H_{t,r}\}$ and the AI sets $\{C_{t,r}\}$ should be governed. From an algorithm-design standpoint, we have no direct control over human behavior, and we therefore treat the human prediction sets as black-box objects provided to the system. Therefore, our focus is exclusively on designing the AI’s prediction sets.

We now turn to the question of how the AI’s prediction sets should be constrained so that they are maximally beneficial to humans in a multi-round setting.

\subsection{User-Defined Collaboration Requirements}
\label{subsec:rule-definitions}
In multi-round interactions, counterfactual harm and complementarity no longer admit a single canonical definition. Even when the high-level goal is clear, preserving correct human judgments and complementing incorrect ones, the precise operational meaning can vary across applications.

Let us first focus on counterfactual harm. To gain intuition, fix a day $t$ and a round $r \ge 2$. Consider the human transcript of sets up to this point,
$H_{t,1:r} :=(H_{t,1},\dots,H_{t,r})$, and the AI set at round $r$, $C_{t,r} \subseteq \mathcal{Y}_t$.

From the perspective of defining a counterfactual harm requirement at round $r$, the high level goal is to enforce that the AI includes the truth at this round, i.e., that $\mathbf{1}\{y_t \in C_{t,r}\}$ is likely, \emph{conditional on an event indicating that the human ``had'' the truth in some meaningful sense.} For instance, one might condition on:

- $\mathbf{1}\{y_t \in H_{t,r}\}$, that is human included the truth at round $r$.

- $\mathbf{1}\{y_t \in H_{t,r}\cap H_{t,r-1}\}$,  that is human insisted on truth over the last two rounds so far.

- $\mathbf{1}\{\exists\, r' \le r:\ y_t \in H_{t,r'}\}$, that is the human proposed the truth at least once.

Many other conditions are possible. Different choices can lead to qualitatively different collaboration dynamics, including different AI set sizes. This motivates a general framework in which counterfactual harm is defined with respect to a user-specified rule, depending on user's preferences in different applications.

\paragraph{\textbf{Rule-Based Formulation.}}
We formalize user-defined counterfactual-harm through a \emph{rule}, a binary function that specifies when the counterfactual-harm constraint is applied. Concretely, a counterfactual-harm rule maps a label $y$ and a complete transcript prefix of human sets for the day, and a round index $r$ to $\{0,1\}$:
\[
R^{\mathrm{CH}}\!\bigl(y, H_{t,1:N_t}, r \bigr)\in\{0,1\},
\]
where $H_{t,1:N_t}:=(H_{t,1},\dots,H_{t,N_t})$ denotes the full transcript prefix of human sets over all $N_t$ round of day $t$, and $r\in\{1,\dots,N_t\}$ indexes the round under consideration. We interpret $R^{\mathrm{CH}}(y,H_{t,1:N_t}, r)=1$ as the rule ``triggering'' at round $r$ of day $t$ if $y$ were the correct label. The examples above correspond to different choices of $R^{\mathrm{CH}}$.
Allowing the rule to depend on the full transcript $H_{t,1:N_t}$ allows to capture rules that involve global properties of the interaction, such as whether round $r$ is the terminal round ($r = N_t$), or whether the human ever included a label across the entire dialogue. Since rules are evaluated post-hoc, after the interaction on day $t$ has concluded and the ground truth $y_t$ is revealed, the full transcript is available at evaluation time.

Given a rule, we say that \emph{counterfactual harm occurs on day $t$} if there exists a round in which the rule triggers but the AI fails to include the truth:
$$
E^{\mathrm{CH}}_t = \max_{r\in\{1,\dots,N_t\}} R^{\mathrm{CH}}(y_t,H_{t,1:N_t}, r)\mathbf{1}\{y_t \notin C_{t,r}\}
$$
Accordingly, the cumulative number of counterfactual-harm violations up to time $T$ is
$
\sum_{t=1}^T E^{\mathrm{CH}}_t
$.

A similar construction applies to complementarity. We formalize user-defined complementarity requirements through a \emph{verifiable rule}
\[
R^{\mathrm{Comp}}\!\bigl(y,H_{t,1:N_t} ,r\bigr)\in\{0,1\},
\]
where $R^{\mathrm{Comp}}(y,H_{t,1:N_t}, r)=1$ indicates that, if $y$ were the correct label, the AI is required to include it at round $r$ in order to complement the human’s proposal.

Given a complementarity rule, we say that a \emph{complementarity error occurs on day $t$} if there exists a round in which the rule triggers but the AI fails to include the truth:
\[
E^{\mathrm{Comp}}_t
=
\max_{r\in\{1,\dots,N_t\}}
R^{\mathrm{Comp}}(y_t,H_{t,1:N_t}, r)
\mathbf{1}\{y_t \notin C_{t,r}\}.
\]
The cumulative number of complementarity failures up to time $T$ is then given by
$\sum_{t=1}^T E^{\mathrm{Comp}}_t.$

The goal of multi-round human--AI collaboration is to design the AI prediction sets $\{C_{t,r}\}$ so as to control counterfactual harm and complementarity errors over time. Concretely, for user-specified tolerances $\varepsilon,\delta \in (0,1)$, We seek to enforce the following constraints for all horizons $T \ge 1$:
\begin{center}
\begin{tcolorbox}[
    colback=gray!5,
    colframe=black!60,
    boxrule=0.6pt,
    arc=2mm,
    width=0.95\linewidth,
    top=0pt,
    bottom=4pt,
    left=6pt,
    right=6pt,
    before skip=0pt,
    after skip=0pt
]
%\vspace{-3mm}
\[
\frac{1}{T}\sum_{t=1}^T E^{\mathrm{CH}}_t \le \varepsilon 
\quad \text{and} \quad
\frac{1}{T}\sum_{t=1}^T E^{\mathrm{Comp}}_t \le \delta .
\]
\end{tcolorbox}
\end{center}

\section{Algorithm and Guarantees}
\label{sec:algorithm-and-gaurantees}
In this section, we present our online algorithm for multi-round human--AI collaboration. Throughout this section, we follow the interaction protocol and asssume two rule functions $R^{\mathrm{CH}}$ and $R^{\mathrm{Comp}}$ are pre-specified by the user, all of which discussed in Section \ref{sec:multi_round}.

Recall that the rules $R(y, H_{t,1:N_t}, r)$ depend on the full transcript of human sets, which is only available after the interaction on day $t$ has concluded. However, the AI must construct its prediction sets during the interaction, at each round $r$, before $N_t$ is known. To handle this, at round $r$ we evaluate each rule on the transcript prefix $H_{t,1:r}$, treating the current round as if it were the terminal round:
\[
\bar{R}(y, H_{t,1:r}) := R(y, H_{t,1:r}, r).
\]
We refer to $\bar{R}$ as the \emph{online activation} of the rule $R$. 

\begin{assumption}\label{ass:activation}
For each rule $R \in \{R^{\mathrm{CH}}, R^{\mathrm{Comp}}\}$ and its online activation $\bar{R}$, we assume that
\[
R(y,\, H_{t,1:N_t},\, r) \;\le\; \bar{R}(y,\, H_{t,1:r})
\]
for all labels $y \in \mathcal{Y}$, all transcripts $H_{t,1:N_t}$, and all rounds $r \le N_t$.
\end{assumption}

\begin{remark}
Assumption~\ref{ass:activation} is a mild technical condition and is satisfied by a broad class of rules. All rules discussed in Section~\ref{subsec:rule-definitions} and instantiated in the experiments Section~\ref{sec:experiments} satisfy this assumption.
\end{remark}

%Since $\bar{R}$ depends only on information available at round $r$, it can be used during set construction. By definition, $\bar{R}$ is at least as permissive as $R$, i.e., $R(y, H_{t,1:N_t}, r) \le \bar{R}(y, H_{t,1:r})$ for all $r \le N_t$, since evaluating the rule at a prefix can only remove conditions that depend on future rounds. This ensures that whenever a rule triggers at some round, the corresponding threshold was active during set construction at that round.

% Recall that our goal is to design the AI prediction sets $\{C_{t,r}\}$ so as to control the two cumulative error rates associated with counterfactual harm and complementarity,\textcolor{red}{repetition, its literally right above}
% \[
% \frac{1}{T}\sum_{t=1}^T E^{\mathrm{CH}}_t \le \varepsilon 
% \quad \text{and} \quad
% % \frac{1}{T}\sum_{t=1}^T E^{\mathrm{Comp}}_t \le \delta ,
% \]
% for all horizons $T\ge 1$.

Let us introduce the notion of a transcript. For each day $t$ and round $r$, let
\[
\mathcal{T}_{t,r}
:=
\bigl(
P_t,\,
\{(H_{t,j},U_{t,j}),(C_{t,j},A_{t,j})\}_{j=1}^{r}
\bigr)
\]
denote the full transcript of the interaction up to round $r$ on day $t$, including the problem instance, human and AI prediction sets, and their textual messages.

We let
$
s : \mathcal{T} \times \mathcal{Y} \to [0,1]
$
be a nonconformity score that assigns a real-valued score to each label $y$ given a transcript. At day $t$ and round $r$, the AI constructs its prediction set as
\begin{align*}
C_{t,r}
=
\Bigl\{
y \in \mathcal{Y}_t :
s(\mathcal{T}_{t,r},y)
&\le
\tau_t\, \bar{R}^{\mathrm{CH}}\!\bigl(y,H_{t,1:r}\bigr)
+
\lambda_t\, \bar{R}^{\mathrm{Comp}}\!\bigl(y,H_{t,1:r}\bigr)
\Bigr\}.
\end{align*}

%\textcolor{red}{you should fix the notation in this section too. Also everywhere else that will be affected including proofs. Here, after fixing the equations and notations, also add some text explaining that here this is as if like each round is the last round (since we dont know when it ends on the flu, and blah blah blah.}

Here, the online activations $\bar{R}^{\mathrm{CH}}$ and $\bar{R}^{\mathrm{Comp}}$ determine which thresholds are active for a given label at round $r$, while the thresholds $\tau_t$ and $\lambda_t$ determine the strictness with which counterfactual harm and complementarity are enforced.
%\textcolor{blue}{Note that because treating the current round as terminal can only remove conditions that restrict when the rule is triggered, we have $R(y, H_{t,1:N_t}, r) \le \bar{R}(y, H_{t,1:r})$ for all $r \le N_t$. This ensures that whenever a rule triggers at some round and a violation could be counted in the error, the corresponding threshold was active during set construction at that round. } 
Importantly, the thresholds $\tau_t$ and $\lambda_t$ are fixed throughout each day $t$ and therefore indexed only by $t$. That is, within a single problem instance $P_t$, the same thresholds are used across all rounds, while the prediction sets $C_{t,r}$ would still evolve with $r$ as the transcript grows and the score values $s(\mathcal{T}_{t,r},\cdot)$ change.

We discuss concrete choices of the score function $s(\cdot,\cdot)$ for our applications in Section~6.

Under this parameterization, the problem of designing AI prediction sets reduces to the problem of calibrating the daily thresholds $\{\tau_t, \lambda_t\}$ online, so as to control the cumulative counterfactual harm and complementarity errors.

The thresholds are updated at the end of each day, once the ground-truth label $y_t$ is revealed and the error indicators $E^{\mathrm{CH}}_t$ and $E^{\mathrm{Comp}}_t$ can be evaluated. Specifically, for a step size $\eta > 0$ and target levels $\varepsilon,\delta \in (0,1)$, the thresholds are updated according to
\begin{center}
\begin{tcolorbox}[
    colback=gray!5,
    colframe=black!60,
    boxrule=0.6pt,
    arc=2mm,
    width=0.95\linewidth,
    top=3pt,
    bottom=3pt
]
%\vspace{-3mm}
\[
\begin{aligned}
\tau_{t+1} &= \max\!\left\{ 0,\; \tau_t + \eta \bigl(E^{\mathrm{CH}}_t - \varepsilon\bigr) \right\},\\[2pt]
\lambda_{t+1} &= \max\!\left\{ 0,\; \lambda_t + \eta \bigl(E^{\mathrm{Comp}}_t - \delta\bigr) \right\}.
\end{aligned}
\]
\end{tcolorbox}
\end{center}
%\textcolor{red}{fix this.}

The $\max\{\cdot,0\}$ operator can be interpreted as a projection step that ensures the thresholds remain nonnegative. This projection is needed to handle some degenerate cases in proof of Theorem \ref{thm_finite}, which provides the error control guarantees.

% which the update would otherwise drive a threshold below zero, which would invalidate the set construction.

The update rule itself follows a standard approach for online tracking of quantiles. Intuitively, whenever the observed error exceeds its target level, the corresponding threshold is increased to make future prediction sets more conservative; when the error falls below the target, the threshold is decreased. As the following Theorem shows, this simple update suffices to control cumulative counterfactual harm and complementarity errors in the online setting.

\begin{theorem}[Finite-sample online control of collaboration errors] \label{thm_finite}
Under Assumption~\ref{ass:activation} and assuming $\tau_1=\lambda_1=0$ and step size $\eta>0$, then for every horizon $T\ge 1$,
\[
\frac{1}{T}\sum_{t=1}^T E^{\mathrm{CH}}_t
\le
\varepsilon + \frac{1+\eta}{\eta T},
\qquad
\frac{1}{T}\sum_{t=1}^T E^{\mathrm{Comp}}_t
\le
\delta + \frac{1+\eta}{\eta T}.
\]
\end{theorem}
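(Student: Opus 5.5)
The plan is the standard telescoping argument for projected online quantile tracking. It has two steps: a telescoping lower bound on the final threshold in terms of the cumulative error, and a uniform upper bound on the threshold that comes from the structure of the prediction sets. I will treat counterfactual harm in detail. Complementarity is identical after swapping the roles of $(\tau_t,\varepsilon,R^{\mathrm{CH}})$ and $(\lambda_t,\delta,R^{\mathrm{Comp}})$.

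\emph{Step 1 (telescoping).} Since $\max\{0,z\}\ge z$, the update gives $\tau_{t+1}\ge \tau_t+\eta(E^{\mathrm{CH}}_t-\varepsilon)$ for every $t$. Summing over $t=1,\dots,T$ and using $\tau_1=0$ yields
\[
\eta\sum_{t=1}^T\bigl(E^{\mathrm{CH}}_t-\varepsilon\bigr)\le \tau_{T+1}.
\]
It therefore suffices to show $\tau_t\le 1+\eta$ for all $t$. Dividing by $\eta T$ then gives exactly the claimed bound $\varepsilon+(1+\eta)/(\eta T)$.

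\emph{Step 2 (boundedness of the thresholds).} This is the main step, and it is where Assumption~\ref{ass:activation} enters. The key claim is that if $\tau_t>1$ then $E^{\mathrm{CH}}_t=0$. Suppose $R^{\mathrm{CH}}(y_t,H_{t,1:N_t},r)=1$ for some round $r\le N_t$. By Assumption~\ref{ass:activation}, $\bar R^{\mathrm{CH}}(y_t,H_{t,1:r})=1$. Because $\lambda_t\ge 0$ (guaranteed by the projection) and $\bar R^{\mathrm{Comp}}\ge 0$, the threshold applied to $y_t$ at round $r$ is at least $\tau_t>1$. Since the score satisfies $s(\mathcal{T}_{t,r},y_t)\le 1<\tau_t$, we get $y_t\in C_{t,r}$. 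So every triggered round has $\mathbf{1}\{y_t\notin C_{t,r}\}=0$, and the maximum defining $E^{\mathrm{CH}}_t$ vanishes.

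\emph{Step 3 (induction).} I then show $\tau_t\le 1+\eta$ by induction, with base case $\tau_1=0$. If $\tau_t\le 1$, then $\tau_{t+1}\le \max\{0,\tau_t+\eta(1-\varepsilon)\}\le 1+\eta$. If $1<\tau_t\le 1+\eta$, then Step 2 gives $E^{\mathrm{CH}}_t=0$, so $\tau_{t+1}=\max\{0,\tau_t-\eta\varepsilon\}\le\tau_t\le 1+\eta$. Either way the bound propagates.

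The only delicate points are in Step 2. First, the assumption must be applied to the specific round $r$ at which the rule fires, which is valid since $r\le N_t$. Second, the projection is what guarantees the nonnegativity of the other threshold, and Step 2 relies on that nonnegativity. With these in place, combining Steps 1 and 3 finishes the proof.
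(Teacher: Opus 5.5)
Your proposal is correct and follows the paper's proof essentially step for step. Both arguments use the telescoping inequality from $\max\{0,z\}\ge z$. Both then show that a large $\tau_t$ forces $E^{\mathrm{CH}}_t=0$, via Assumption~\ref{ass:activation}, $\lambda_t\ge 0$ and $s\le 1$, which yields the uniform bound $\tau_t\le 1+\eta$. Your explicit induction (with the strict case $\tau_t>1$ instead of the paper's $\tau_t\ge 1$; either works) simply makes rigorous the case analysis the paper states informally.
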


Theorem~\ref{thm_finite} shows that our algorithm achieves finite-sample control of both counterfactual harm and complementarity, thereby enforcing user-defined constraints on the collaboration dynamics. Importantly, these guarantees hold in a fully online and distribution-free manner, without any assumptions on the distribution of problems, the human’s behavior, or the internal mechanisms of the AI system.

\section{Experiments}
\label{sec:experiments}
We evaluate our framework in two interactive settings: LLM-simulated agents for scalable verification and human crowdsourcing for real-world validation. Our objective is twofold. First, we verify that our online algorithm strictly adheres to nominal targets $\varepsilon$ and $\delta$ under non-stationary conditions. 
We then validate the utility of our organizing principles by showing that counterfactual harm and complementarity are the right metrics to control from the AI side. While we cannot directly dictate human behavior, we find that enforcing these human-centric rules translates to predictable and desirable changes in the human's final decision quality. By varying the strictness of these guarantees, we demonstrate that these two notions are not just mathematical abstractions but are the fundamental levers required to steer human judgment toward better outcomes. Specifically, by algorithmically regulating the AI's uncertainty, we protect humans from abandoning correct intuitions and help them recover correct answers they initially missed.
%Second, we validate the utility of our organizing principles: counterfactual harm and complementarity. Specifically, we investigate whether these metrics are meaningful proxies, that is, whether controlling these AI-centric error rates translates into predictable and desirable changes in the human's final decision quality. By varying the strictness of these guarantees, we demonstrate that these two notions are not just mathematical abstractions, but are the fundamental levers required to steer human-AI collaboration toward better outcomes.
\paragraph{Instrantiation of Rules and Score Functions.}
To instantiate the general framework, each experiment specifies: (i) a pair of verifiable rules $(R^{\mathrm{CH}},R^{\mathrm{Comp}})$ as in Section~\ref{subsec:rule-definitions}; and (ii) a nonconformity score $s(\mathcal{T}_{t,r},y)\in[0,1]$ (Section~\ref{sec:algorithm-and-gaurantees}). The rules used across both settings are defined as follows: for a label $y \in \mathcal{Y}$ and a transcript history $\mathcal{T}_{t,r}$, the \textit{counterfactual-harm rule} is triggered at any round where the human's latest prediction set already contains the true label: $$R^{\mathrm{CH}}(y, H_{t,1:N_t},r) = \mathbf{1}\{y \in H_{t,r}\}.$$ Since this rule depends only on the current round, its online activation coincides with the rule itself: $\bar{R}^{\mathrm{CH}}(y, H_{t,1:r}) = \mathbf{1}\{y \in H_{t,r}\}$.
%When active, the AI is held accountable for preserving that truth; a violation $E_{t}^{CH}=1$  occurs if the AI's prediction set $C_{t,r}$ excludes it. 
The \textit{complementarity rule} triggers only at the final round $N_t$ if the human's final proposal lacks the label:  $$
R^{\mathrm{Comp}}(y, H_{t,1:N_t},r) = \mathbf{1}\{y \notin H_{t,r}\} \cdot \mathbf{1}\{r = N_t\}$$
These rules ensure that $E^{\mathrm{CH}}_t$ monitors whether the AI fails to include the ground truth the human already considered at any point, while $E^{\mathrm{Comp}}_t$ monitors whether the AI fails to provide the ground truth in the final round when the human is missing it. In both experiments, the AI agents are powered by LLMs that provide probability vectors $p_{t,r}(y) \in [0,1]$ over the label space $\mathcal{Y}_t$ conditioned on the transcript $\mathcal{T}_{t,r}$. To ensure these values constitute a valid probability distribution and to mitigate the impact of LLM hallucinations, we normalize the raw model outputs across the label space. From these normalized probabilities, we derive the standard nonconformity score:$$ s(\mathcal{T}_{t,r},y) := 1 - p_{t,r}(y). $$

\subsection{LLM-Simulated Experiments}
\label{subsec:llm-simulated-exp}
To evaluate our framework at scale, we first simulate human-AI collaboration using LLMs as proxy agents to model human-like decision making in a controlled environment \citep{aher2023usinglargelanguagemodels}; enabling large-scale, reproducible tests of multi-round dialogue dynamics.
%This setup allows for large-scale, reproducible testing of the multi-round dialogue dynamics inherent in high-stakes collaboration.

%To evaluate our algorithm’s effectiveness in a controlled yet realistic setting, we simulate human-AI collaboration using Large Language Models (LLMs). This methodology allows us to run large-scale, reproducible experiments while maintaining the complex, multi-round dialogue dynamics found in real-world collaboration \citep{aher2023usinglargelanguagemodels}.

\paragraph{Experimental Design: Medical Diagnosis.} We utilize the DDXPlus dataset \citep{tchango2022ddxplusnewdatasetautomatic}, which consists of synthetic patient records including demographics, symptoms, and ground-truth diagnoses. Each day $t$ corresponds to one patient case $P_t$ with ground-truth diagnosis $y_t\in\mathcal{Y}_t$.

To ensure collaboration is functionally necessary, we introduce a structural information asymmetry: the \emph{human agent} (GPT-4o-mini) observes demographics and a small set of initial symptoms, while the \emph{AI agent} (DeepSeek-Chat) observes a complementary subset of diagnostic evidence but is blind to demographics. The interaction unfolds through a dynamic dialogue where the human maintains a fixed-size candidate set of two diagnoses. In each round, the human proposes a set $H_{t,r}$ and a textual message $U_{t,r}$, to which the AI responds with a refined prediction set $C_{t,r}$ and its own textual feedback $A_{t,r}$. Following each AI response, the human agent evaluates the new information to decide whether to update their prediction set, request further clarification, or terminate the session. Once the interaction concludes, either by the human’s choice or upon reaching a round limit, the human provides a final assessment set. This final set captures the human's ultimate judgment following the dialogue, allowing us to evaluate how effectively the AI's uncertainty communication steered the human toward the correct diagnosis.

%\paragraph{Score function.} At round $r$, the AI produces a diagnosis-level probability vector $p_{t,r}(y)\in[0,1]$ over $y\in\mathcal{Y}_t$ conditioned on the current transcript $\mathcal{T}_{t,r}$ (see Section~\ref{sec:algorithm-and-gaurantees}). To ensure these values constitute a valid probability distribution and to mitigate the impact of potential LLM hallucinations, we normalize the raw model outputs across the label space $\mathcal{Y}_t$. We then utilize the standard nonconformity score:
%\[ s(\mathcal{T}_{t,r},y)\;:=\;1-p_{t,r}(y), \] where smaller scores correspond to more plausible labels under the AI.

\paragraph{Empirical Convergence.} Figure~\ref{fig:llm_app_convergence} reports the running averages of the realized errors $\{E_t^{\mathrm{CH}}\}$ and $\{E_t^{\mathrm{Comp}}\}$ for a representative target pair. Consistent with Theorem~\ref{thm_finite}, the observed averages stabilize near their nominal targets despite the fact that the human agent’s dialogue policy, stopping times $N_t$, and set-updates are unconstrained and may vary across trials. Additional settings appear in Appendix~\ref{app:extended-experimental-results}. 
%The stable convergence of these running averages to their nominal targets validated the finite-sample guarantees of our adaptive thresholding mechanism, even as the "human" agent’s internal belief state and stopping criteria are free to evolve during the interaction.

%Across hundreds of sequential trials, the empirical cumulative error rates for both rules converge to their pre-specified targets (e.g., $\varepsilon_{\text{CH}} = 0.05$ and $\varepsilon_{\text{Comp}} = 0.50$). This convergence validates the finite-sample guarantees of our adaptive thresholding mechanism, even as the "human" agent’s internal belief state and stopping criteria are free to evolve during the interaction.
\begin{figure}[h]
    \centering
    \includegraphics[width=0.49\linewidth]{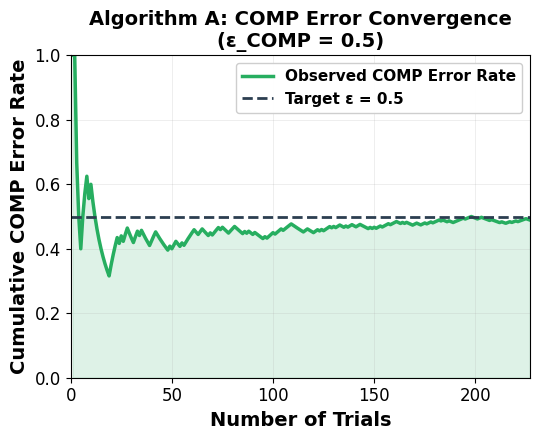}
    \includegraphics[width=0.49\linewidth]{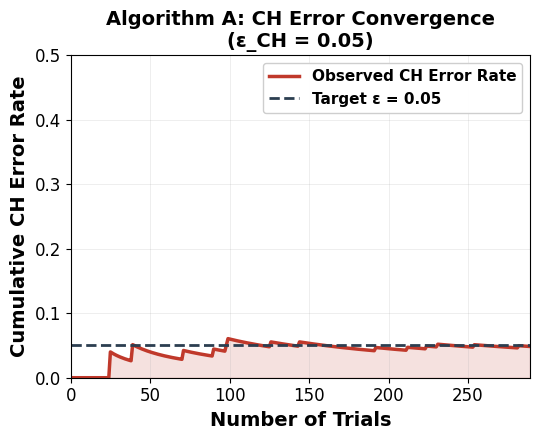}
    \caption{Empirical convergence of error rates in the LLM-simulated medical task for complementarity (left) and counterfactual harm (right). Both plots track the cumulative average running error, $\text{AvgError}_t = \frac{1}{t} \sum_{i=1}^t \mathbf{1}\{\text{Error}_i\}$, over sequential trials.}
    \label{fig:llm-convergence1}
\end{figure}

\paragraph{From AI Sets to Human Decisions} Having established that we can control these metrics, we evaluate whether they are meaningful proxies for collaboration quality. By running multiple instances of our algorithm with varying error targets, we observe how the complementarity and counterfactual harm constraints influence the human's final decisions. As illustrated in Figure~\ref{fig:llm-comparison-both-rules}, we observe a direct correlation between the AI’s error targets and human outcomes.

\begin{figure}[h]
    \centering
    \includegraphics[width=0.49\linewidth]{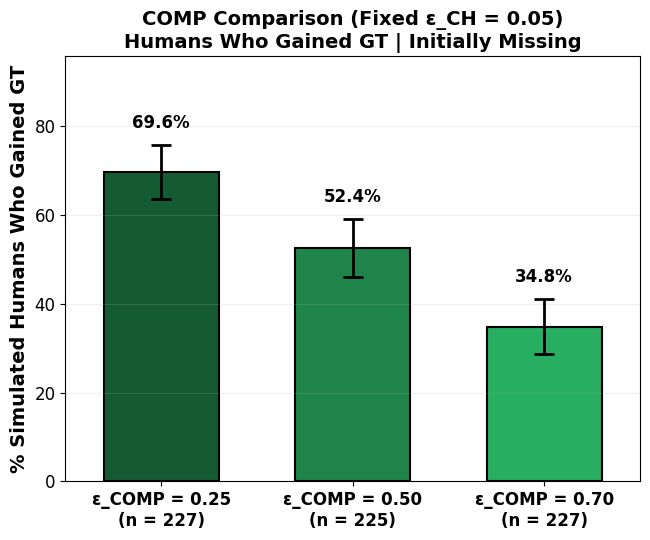}
    \includegraphics[width=0.49\linewidth]{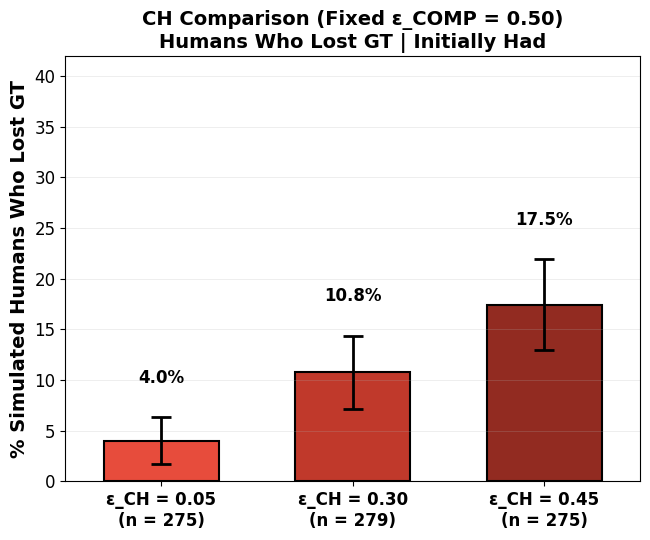}
    \caption{Human decision outcomes across varying AI error targets in the LLM-simulated medical task. (Left) Human GT gain rate as a function of the complementarity error target ($\varepsilon_{\text{COMP}}$). (Right) Rate of abandoning a correct initial guess (GT loss) as a function of the counterfactual harm error target ($\varepsilon_{\text{CH}}$).}
    \label{fig:llm-comparison-both-rules}
\end{figure}

Specifically, when we increasing the allowed counterfactual harm (CH) results in participants frequently abandoning correct initial guesses. This happens when the AI's set excludes the true label, inducing the human to discard their correct judgment. Conversely, tightening complementarity (COMP) increases the recovery rate—the frequency with which participants identify the correct label after an initial error. In these cases, the AI’s set includes the missed label, which the human then adopts. These results demonstrate that CH and COMP are the fundamental levers of collaboration: by regulating these two metrics in the AI’s prediction sets, we achieve predictable improvements in human decisions without the need to model or control human behavior directly.

%, we observe that human agents are significantly more likely to abandon a correct initial guess. This happens because the AI's prediction set fails to include the true label, effectively misleading the human and causing them to lose the correct answer. Conversely, as we tighten the complementarity (COMP) requirement, the human gain rate ( the rate at which they ... ) increases. In these cases, the AI suggests the correct answer that the human had not initially considered, which leads the human to include it in their final assessment. These results demonstrate that CH and COMP are the fundamental levers of collaboration: by regulating these two metrics in the AI’s prediction sets, we achieve predictable improvements in human decisions without needing to model or control human behavior directly.

\subsection{Crowdsourcing Experiment: Collaborative Shape Counting}
\label{subsec:crowdsourcing-experiment}

While LLM simulations provide a scalable testbed for our algorithm, they cannot fully capture the nuances of human trust, fatigue, or heuristic-driven reasoning. We therefore conduct a human crowdsourcing study to test our framework's real-world robustness. Specifically, we evaluate whether enforcing CH and COMP constraints strictly on the AI's prediction sets reliably improves human decisions and confirming that these principles remain effective when interacting with unpredictable human participants.
%To validate our framework in a real-world setting, we conducted a human crowdsourcing study. This experiment investigates whether the relationship between AI-side error control and downstream human outcomes remains robust when interacting with human participants. Specifically, we test whether the verifiable rules of counterfactual harm and complementarity, enforced solely on the AI's prediction sets, consistently steer human decision-making toward more accurate assessments. %\textcolor{red}{ also add the footnote link for prolific. 1000 interactions, obtained across 50 human participants, each completing 20.}

\paragraph{Task Design.} 
We evaluate our framework using a collaborative visual reasoning task where participants work with an AI assistant to estimate the count of target shapes(e.g. triangles, squares, or stars) within a cluttered image. Each trial begins with the human viewing an image for one second before providing an initial guess as a contiguous range of three integers. This fixed set size is maintained for human's proposals throughout the interaction. %e.g., $\{12, 13, 14\}$).

To facilitate collaboration, an AI assistant, powered by a Gemini 2.5 Flash model \citep{geminiteam2024gemini}, analyzes a noisy version of the image to produce a distribution over counts, which our algorithm converts into a prediction set for the human. The interaction then enters a second round where the human re-views the image for a 0.5 seconds, observes the AI’s set, and revises their guess. The AI then updates its distribution given the conversation history and provides a final set. The interaction concludes with the human submitting a final assessment based on the dialogue. Appendix~\ref{app:crowdsourcing} provides full interface and parameter details.

%We design a \emph{collaborative shape counting task} where a human and an AI assistant work together over multiple rounds to estimate the number of a target shape (triangles, squares, or stars) in an image. Each trial proceeds as follows. The human views an image containing various shapes for a brief period( 1 second), then provides an initial guess as a contiguous range of three integers (e.g., $\{12, 13, 14\}$). This fixed set size of three is maintained throughout the interaction. The AI observes a noisy version of the same image and produces a probability distribution over possible counts using Gemini 2.5 Flash-lite model \textcolor{red}{weird place to say the AI is the llm, say it earlier nd once}. Our algorithm utilizes these probabilities and returns a prediction set to the human. In the second round of interaction, the human views the image again for a shorter period of time (0.5 seconds), observes the AI's set, and submits a revised guess. The AI also updates its probabilities conditioned on the conversation history and provides a new set. After observing this final AI set, the human submits their final assessment. We provide the complete user interface and additional details in Appendix~\ref{app:crowdsourcing}.

\begin{wraptable}{r}{0.4\textwidth}\
\vspace{-1pt} % optional: tighten vertical spacing
\centering
\begin{tabular}{lcc}
\toprule
& $\varepsilon_{\text{CH}}$ & $\varepsilon_{\text{COMP}}$ \\
\midrule
Alg A & 0.05 & 0.50 \\
Alg B & 0.30 & 0.50 \\
Alg C & 0.05 & 0.70 \\
\bottomrule
\end{tabular}
\caption{\textbf{Algorithm instantiations.} Target rates for counterfactual harm (CH) and complementarity (COMP).}
%\end{table}
\vspace{-8pt} % optional
\end{wraptable}
We instantiate three AI assistants, each running our algorithm with different target error rates for counterfactual harm and complementarity: Algorithms A and B share the same complementarity target but differ in counterfactual harm. Algorithms A and C share the same counterfactual harm target but differ in complementarity.

We ran our study via Prolific\footnote{\url{https://www.prolific.co/}} involving $1000$ total interactions across 50 participants who arrived sequentially. Each participant was randomly assigned to one of three algorithms and completed 20 trials. For each algorithm, we tracked the calibration state globally across all users in a single stream, updating thresholds continuously as new participants joined. This design creates significant non-stationarity due to variations in individual accuracy, speed, and trust, providing a rigorous real-world stress test for our online, distribution-free guarantees.
%We conducted the study through Prolific\footnote{\url{https://www.prolific.co/}} with a pool of 50 participants who arrived sequentially. Each participant was randomly assigned to one of the three algorithms and completed 20 trials, resulting in a total of 1,000 interactions. Crucially, the algorithm state for each condition was tracked globally: all participants assigned to a specific algorithm contributed to a single, continuous stream of queries, with the calibration thresholds updating across users. This setup introduces significant non-stationarity, as the algorithm must maintain its guarantees while navigating varying levels of human accuracy, response speed, and trust throughout the sequence of participants. By tracking performance across this diverse and shifting human pool, we demonstrate the robustness of our online distribution-free guarantees in a real-world environment.
%We conduct the study through Prolific with a pool of 50 participants, where participants arrive sequentially. Each participant is randomly assigned to one algorithm and completes 20 trials. Crucially, the algorithm state is tracked globally: all participants assigned to Algorithm A contribute to a single stream of queries, and the thresholds update continuously across users. This setup introduces significant non-stationarity, as different users exhibit varying levels of accuracy, speed, and trust.

\textbf{Empirical Convergence.}
Despite these shifting human dynamics, Figure~\ref{fig:cs-converegnce-algB} shows that our algorithm successfully adapted its thresholds to maintain target error rates across the entire population. Additional convergence results for the other instantiations can be found in Appendix~\ref{app:extended-experimental-results}.
\begin{figure}[h]
    \centering
    \includegraphics[width=0.49\linewidth]{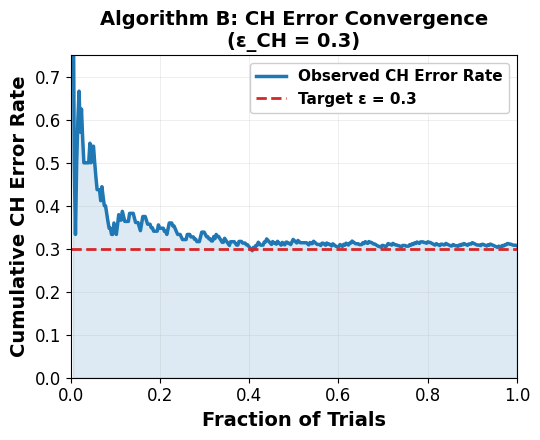}
    \includegraphics[width=0.49\linewidth]{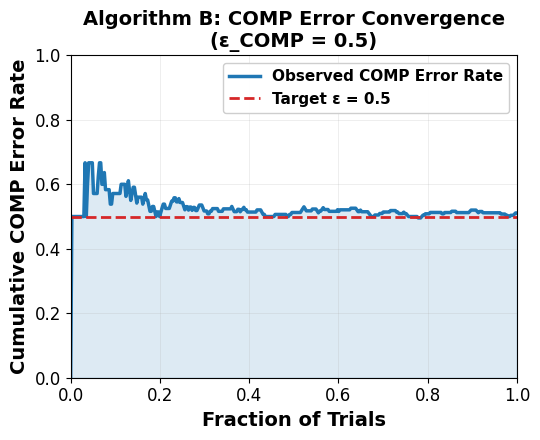}
    \caption{Error convergence for Algorithm B in the crowdsourcing study. Both plots track the cumulative average running error, $\text{AvgError}_t = \frac{1}{t} \sum_{i=1}^t \mathbf{1}\{\text{Error}_i\}$, where observed rates for counterfactual harm (left) and complementarity (right) stabilize at their nominal values.}
    \label{fig:cs-converegnce-algB}
\end{figure}

\paragraph{Downstream Results: From AI Errors to Human Decisions}
We now evaluate how the algorithm's adherence to counterfactual harm (CH) and complementarity (COMP) targets shapes actual human outcomes. To isolate these effects, participants were required to maintain a fixed set size of three integers for their guesses throughout the trial. This constraint is critical: it ensures that changes in human accuracy are driven specifically by the AI's suggestions rather than fluctuations in the human's own uncertainty or set-size efficiency.

\paragraph{Theme 1: Direct Impact of AI Errors on Human Behavior.} 
To determine if CH and COMP serve as effective proxies for collaboration quality, we analyzed trials where these rules were triggered and measured how AI errors altered human outcomes. 

We specifically examined whether preventing harm preserves correct human answers by focusing on trials where the human initially held the ground truth (GT). As shown in Figure~\ref{fig:cs_proof_of_concept_ch_comp} (top row), a CH error significantly increases the rate at which humans abandon the correct answer. In Algorithm A, for example, the GT loss rate rises from 5.1\% to 35.3\% when a CH error occurs. This demonstrates that AI failures to reinforce correct human beliefs can directly induce humans to discard their initial strengths.
%To test whether CH and COMP are meaningful proxies, we restrict to trials where each rule triggers and compare human outcomes depending on whether the AI makes the corresponding error.
%We first analyze whether CH and COMP are meaningful proxies by isolating trials where these rules applied and comparing human performance based on whether the AI committed an error. 

%\textit{Does preventing harm save correct answers?} We analyzed all trials where the human initially held the ground truth (GT). We compared cases where the AI committed a CH error (excluding the GT) versus those where it did not. As shown in the top row of Figure~\ref{fig:cs_proof_of_concept_ch_comp}, a CH error significantly increases the rate at which humans "lose" the GT in their final assessment. For example, in Algorithm A, the GT loss rate jumps from 5.8\% to 28.6\% when a CH error occurs. The AI's failure to reinforce the human’s correct belief effectively misleads them into abandoning it.

We also examined whether AI complementarity directly aids human recovery by focusing on trials where the human initially lacked the ground truth (GT). We partitioned these cases by whether the AI provided the GT or committed a COMP error. The results in Figure~\ref{fig:cs_proof_of_concept_ch_comp} (bottom row) show that successful AI complementarity leads to significantly higher recovery rates in humans. In Algorithm B, for instance, humans identified the GT in 35.8\% of trials when the AI was complementary, compared to just 6.4\% when the AI failed to include it. This gap confirms that providing the missing label in the prediction set is the primary mechanism for helping humans overcome initial errors.
%\textit{Does complementarity directly help humans recover?} We examined all trials where the complementarity rule triggered, meaning the human lacked the GT in their second-round proposal. We then partitioned these trials into cases where the AI successfully complemented the human (included the GT) versus those where it committed a COMP error (failed to include the GT). The bottom row of Figure~\ref{fig:cs_proof_of_concept_ch_comp} reveals that when the AI successfully provides the GT in these critical moments, the rate of "GT Gain" is significantly higher. In Algorithm B, humans gained the truth in 35.8\% of trials when the AI was complementary, compared to a mere 3.1\% when the AI failed to provide it. %This confirms that the AI provides the critical "missing link" required for human recovery.

\begin{figure}[hp]
    \centering
    
    % ------------------ Row 1: CH ------------------
    \begin{subfigure}[t]{0.32\linewidth}
        \centering
        \includegraphics[width=\linewidth]{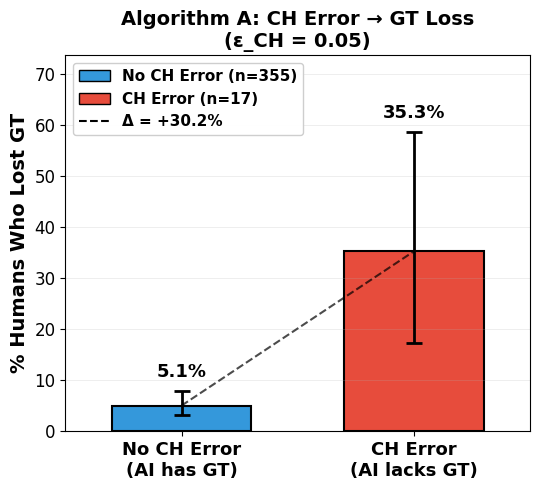}
        \caption{Alg A (CH)}
        \label{fig:poc_algA_ch}
    \end{subfigure}\hfill
    \begin{subfigure}[t]{0.32\linewidth}
        \centering
        \includegraphics[width=\linewidth]{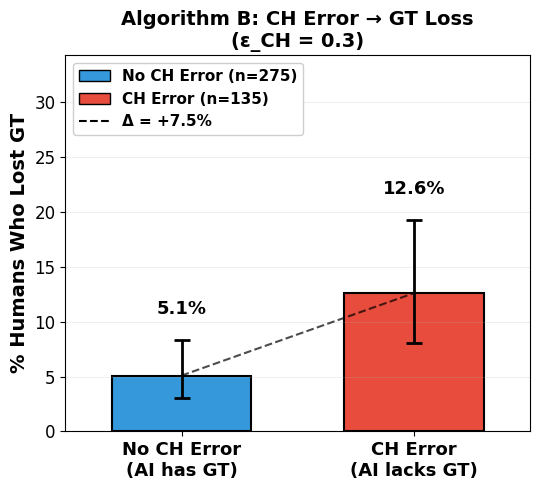}
        \caption{Alg B (CH)}
        \label{fig:poc_algB_ch}
    \end{subfigure}\hfill
    \begin{subfigure}[t]{0.32\linewidth}
        \centering
        \includegraphics[width=\linewidth]{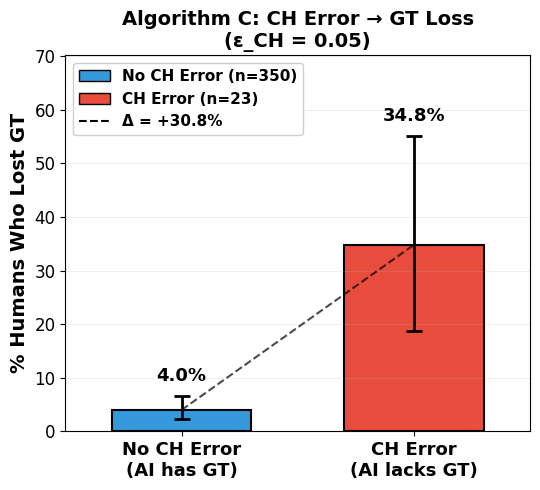}
        \caption{Alg C (CH)}
        \label{fig:poc_algC_ch}
    \end{subfigure}
    
    \vspace{0.6em}
    
    % ------------------ Row 2: COMP ------------------
    \begin{subfigure}[t]{0.32\linewidth}
        \centering
        \includegraphics[width=\linewidth]{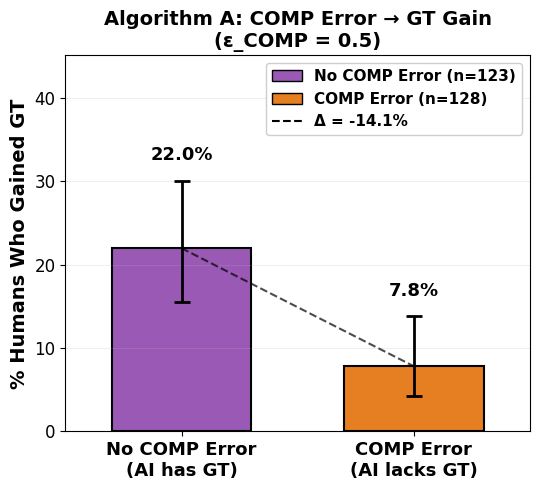}
        \caption{Alg A (COMP)}
        \label{fig:poc_algA_comp}
    \end{subfigure}\hfill
    \begin{subfigure}[t]{0.32\linewidth}
        \centering
        \includegraphics[width=\linewidth]{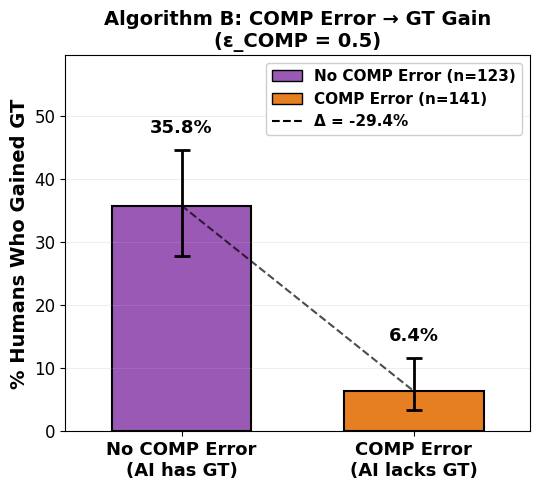}
        \caption{Alg B (COMP)}
        \label{fig:poc_algB_comp}
    \end{subfigure}\hfill
    \begin{subfigure}[t]{0.32\linewidth}
        \centering
        \includegraphics[width=\linewidth]{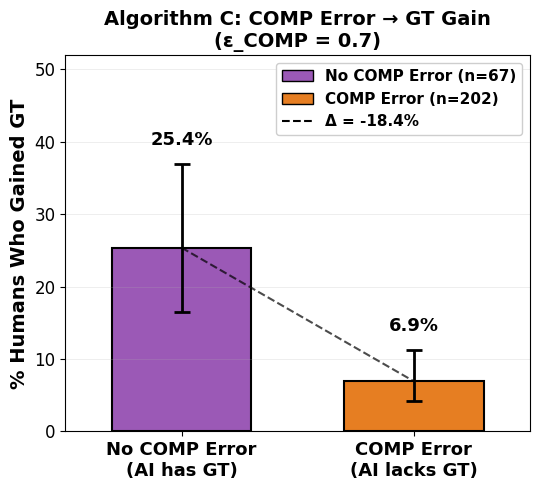}
        \caption{Alg C (COMP)}
        \label{fig:poc_algC_comp}
    \end{subfigure}
    
    \caption{Behavioral impact of AI errors in the crowdsourcing task across Algorithms A, B, and C. \textbf{Top row:} human GT loss rates compared across trials with and without CH errors. \textbf{Bottom row:} human GT gain rates compared across trials with and without COMP errors.}
    \label{fig:cs_proof_of_concept_ch_comp}
\end{figure}

\paragraph{Theme 2: Comparative Impact of Algorithmic Guarantees.} 
Beyond the per-error analysis above, consistent with our LLM simulations, we compare different algorithm instances to demonstrate how shifting the counterfactual harm and complementarity error targets globally influence human performance. Comparing Algorithm B’s looser counterfactual harm guarantee against Algorithm A’s stricter target, we observe a clear correlation: as the AI is permitted to commit more CH errors, participants become significantly more likely to abandon a previously held ground truth (Figure~\ref{fig:cs_alg_comparison_ch_comp}a).
%While the previous analysis isolated the immediate behavioral impact of specific AI errors, consistent with our LLM simulations, we now compare different algorithm instances to demonstrate how shifting the mathematical counterfactual harm and complementarity error targets globally influence human performance. By contrasting Algorithm B’s looser counterfactual harm guarantee against Algorithm A’s stricter target, we observe a clear correlation: as the AI is permitted to exclude correct answers more frequently, there is a corresponding increase in the rate of humans being led astray and "losing" the ground truth they previously held (Figure~\ref{fig:cs_alg_comparison_ch_comp}a).

A similar trend emerges when comparing the complementarity targets of Algorithms A and C (Figure~\ref{fig:cs_alg_comparison_ch_comp}b). We find that a stricter complementarity guarantee directly increases the human recovery rate. Because the human's set size remains fixed at three, these shifts in coverage cannot be attributed to changes in set-size efficiency. Instead, these results confirm that counterfactual harm and complementarity are the primary levers of the interaction. By precisely adjusting the target rates for these two error types, we can predictably steer the final decision quality of the human partner without needing to model their individual behavior or latent trust levels.
\begin{figure}[htp]
    \centering
    \begin{subfigure}[t]{0.49\linewidth}
        \centering
        \includegraphics[width=\linewidth]{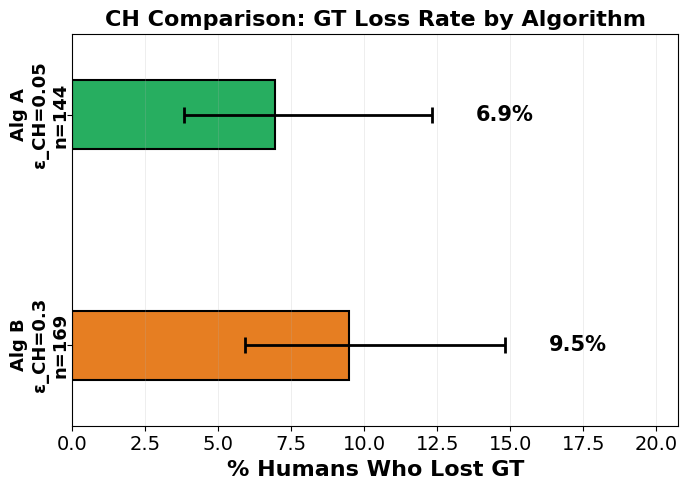}
        \caption{CH: Alg A vs Alg B}
        \label{fig:cs_alg_comparison_ch_a_vs_b}
    \end{subfigure}\hfill
    \begin{subfigure}[t]{0.49\linewidth}
        \centering
        \includegraphics[width=\linewidth]{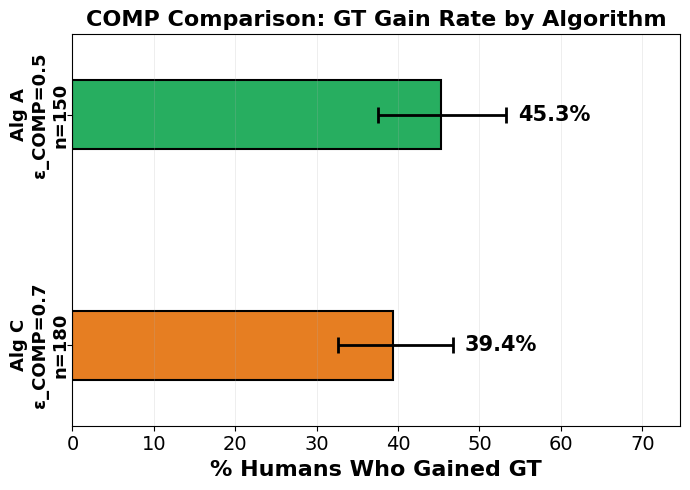}
        \caption{COMP: Alg A vs Alg C}
        \label{fig:cs_alg_comparison_comp_a_vs_c}
    \end{subfigure}
    \caption{\textbf{Impact of nominal error rates on human outcomes:} (a) Comparison of human's GT loss rates between Algorithm A ($\varepsilon_{\text{CH}}=0.05$) and Algorithm B ($\varepsilon_{\text{CH}}=0.30$). (b) Comparison of human's GT gain rates between Algorithm A ($\varepsilon_{\text{COMP}}=0.50$) and Algorithm C ($\varepsilon_{\text{COMP}}=0.70$).}
    \label{fig:cs_alg_comparison_ch_comp}
\end{figure}

\section{Discussion and Future Work}

We introduced a multi-round human--AI interaction protocol, a rule-based framework for specifying user-defined constraints over collaboration dynamics, and an online algorithm with finite-sample guarantees that provably enforces these constraints over a sequence of conversations.

% Our experiments---spanning medical diagnosis with LLM-simulated humans and shape counting with crowd-sourced participants---support a central takeaway: the metrics we control at the algorithmic level, counterfactual and complementarity, are fundamental drivers of successful human-AI collaboration. While we cannot directly dictate human choices, we can strictly govern the quality of the information environment in which those choices are made by controlling AI's output with these human-centric constraints.

A key limitation is that we control collaboration dynamics by constraining the AI's \emph{prediction sets}, which represent only one mechanism for communicating uncertainty. In settings where outputs are open-ended or do not lie in a well-structured candidate space, other uncertainty representations may be more appropriate. Extending our rule-based control framework and online guarantees to alternative uncertainty objects is a promising direction for future work.

\bibliography{example_paper}
\bibliographystyle{icml2026}

\newpage
\appendix

\clearpage
\appendix
\part*{Appendix}
\addcontentsline{toc}{part}{Appendix} % optional: list "Appendix" in main ToC
%\etocsetnexttocdepth{subsection}       % choose depth: section/subsection/...
\localtableofcontents                  % prints only entries under this part
\newpage
%\newpage
\section{Proofs}
\label{app:proofs}
\begin{proof} [Proof of Theorem \ref{thm_finite}]
We prove the counterfactual-harm bound; the complementarity bound follows by the same argument with the substitutions
$(\tau_t,E^{\mathrm{CH}}_t,\varepsilon)\mapsto(\lambda_t,E^{\mathrm{Comp}}_t,\delta)$.

\paragraph{Step 1: Relating $\tau_{T+1}$ to cumulative counterfactual harm.}
By the update rule and the fact that $\max\{0,a\}\ge a$ for all $a\in\mathbb{R}$, we have for every $t$,
\[
\tau_{t+1}
=
\max\!\left\{0,\; \tau_t + \eta\bigl(E^{\mathrm{CH}}_t-\varepsilon\bigr)\right\}
\;\ge\;
\tau_t + \eta\bigl(E^{\mathrm{CH}}_t-\varepsilon\bigr).
\]
Summing over $t=1,\dots,T$ and telescoping yields
\[
\tau_{T+1}-\tau_1
\;\ge\;
\eta\sum_{t=1}^T\bigl(E^{\mathrm{CH}}_t-\varepsilon\bigr),
\]
or equivalently,
\begin{equation}
\label{eq:ch_telescope_simple}
\sum_{t=1}^T E^{\mathrm{CH}}_t
\;\le\;
\varepsilon T + \frac{\tau_{T+1}-\tau_1}{\eta}.
\end{equation}
% The inequality (rather than equality) arises due to the projection step $\max\{\cdot,0\}$.

\paragraph{Step 2: Uniform upper bound on $\tau_t$.}
We claim that $\tau_t \le 1+\eta$ for all $t\ge 1$ when $\tau_1=0$.
If $\tau_t \ge 1$, then $E^{\mathrm{CH}}_t=0$. Indeed, whenever
$\bar{R}^{\mathrm{CH}}(y_t,H_{t,1:r})=1$, we have
\[
\tau_t\,\bar{R}^{\mathrm{CH}}(y_t,H_{t,1:r}) + \lambda_t\,\bar{R}^{\mathrm{Comp}}(y_t,H_{t,1:r})
\;\ge\;
\tau_t
\;\ge\; 1,
\]
since $\lambda_t\ge 0$ by construction. 
Because $s(\mathcal{T}_{t,r},y_t)\in[0,1]$, this implies
$y_t\in C_{t,r}$ at every round where $\bar{R}^{\mathrm{CH}}=1$. Since $R^{\mathrm{CH}}(y_t,H_{t,1:N_t},r) \le \bar{R}^{\mathrm{CH}}(y_t,H_{t,1:r})$, this holds in particular at every round where the rule triggers, hence
%Because $s(\mathcal{T}_{t,r},y_t)\in[0,1]$, this implies
%$y_t\in C_{t,r}$ at every round where the rule triggers, hence
\[
E^{\mathrm{CH}}_t
=
\max_{r\in\{1,\dots,N_t\}}
R^{\mathrm{CH}}(y_t,H_{t,1:N_t,r})\,\mathbf{1}\{y_t\notin C_{t,r}\}
=0.
\]
Therefore, if $\tau_t\ge 1$ then the update satisfies
\[
\tau_{t+1}
=
\max\{0,\tau_t-\eta\varepsilon\}
\le \tau_t,
\]
so $\tau$ cannot increase while above $1$.

If instead $\tau_t<1$, then $E^{\mathrm{CH}}_t\le 1$ and thus
\[
\tau_{t+1}
=
\max\!\left\{0,\; \tau_t + \eta\bigl(E^{\mathrm{CH}}_t-\varepsilon\bigr)\right\}
\le
\tau_t + \eta
<
1+\eta.
\]
Combining the two cases yields $\tau_{t}\le 1+\eta$ for all $t$, and in particular $\tau_{T+1}\le 1+\eta$.

\paragraph{Step 3: Concluding the finite-sample bound.}
Plugging $\tau_1=0$ and $\tau_{T+1}\le 1+\eta$ into \eqref{eq:ch_telescope_simple} gives
\[
\sum_{t=1}^T E^{\mathrm{CH}}_t
\;\le\;
\varepsilon T + \frac{1+\eta}{\eta}.
\]
Dividing by $T$ proves
\[
\frac{1}{T}\sum_{t=1}^T E^{\mathrm{CH}}_t
\le
\varepsilon + \frac{1+\eta}{\eta T}.
\]
The complementarity bound follows identically.
\end{proof}

\section{Extended Related Works}
\label{app:extended-related-works}
\paragraph{Conformal Prediction}
Modern conformal prediction \citep{vovk1999machine,vovk2005algorithmic, 10.5555/1624312.1624322, saunders1999transduction} builds on the classical work on tolerance regions in statistics \citep{wilks1941determination, scheffe1945non} to provide distribution free finite sample validity for prediction sets. Over the past two decades CP has become a standard tool in machine learning in classification, regression, and recently large language models \citep{papadopoulos2002inductive, romano2019conformalizedquantileregression,lei2017distributionfreepredictiveinferenceregression, romano2020classificationvalidadaptivecoverage, angelopoulos2022uncertaintysetsimageclassifiers, angelopoulos2025conformalriskcontrol, noorani2025conformalpredictionseenmissing, quach2024conformallanguagemodeling}. There is a growing body of work with the aim to improve different aspects of conformal prediction, including length efficiency of the prediction sets \citep{kiyani2024lengthoptimizationconformalprediction,stutz2022learningoptimalconformalclassifiers, noorani2025conformalriskminimizationvariance, sadinle2019least}, online guarantees \citep{gibbs2021adaptiveconformalinferencedistribution,JMLR:v25:22-1218, angelopoulos2024onlineconformalpredictiondecaying, bhatnagar2023improvedonlineconformalprediction, ramalingam2025relationshipnoregretlearningonline}, and extension to decision-theoretic frameworks \citep{lekeufack2024conformaldecisiontheorysafe, kiyani2025decisiontheoreticfoundationsconformal, lindemann2025formalverificationcontrolconformal, cortesgomez2025utilitydirectedconformalpredictiondecisionaware}. Particularly, the decision-theoretic angle supports the hypothesis that prediction sets, specifically with guarantees accompanied by conformal prediction, are a useful tool for human-AI collaboration in high stakes applications, where reliable uncertainty estimates are essential for enabling trust and communication. 

\paragraph{Prediction sets for Human-AI Decision Support} A recent growing body of work, considers prediction sets as a structured interface for collaboration and human decision making support. \cite{straitouri2023improvingexpertpredictionsconformal} formalizes how conformal prediction sets can improve expert prediction in multiclass classification. \cite{benz2025humanalignmentinfluencesutilityaiassisted} propose human aligned calibration over standard calibration which they argue ensures that ai generated prediction sets are calibrated specifically on instances where human is likely to be incorrect. \cite{babbar2022utilitypredictionsetshumanai} empirically evaluate sets as a form of advice in human-AI sets, and show that sets can improve accuracy compared to single label predictions. \cite{hullman2025conformalpredictionhumandecision} analyze CP from a decision theoretic perspective and examine tensions between coverage guarantees and decision relevant uncertainty. \cite{straitouri2024controllingcounterfactualharmdecision} analyze prediction sets based decision support through the lens of counterfactual harm, and formalize the concerns that requiring humans to select from machine provided sets may degrade performance. \cite{detoni2024humanaicomplementaritypredictionsets} propose a greedy algorithm to construct prediction sets that empirically improved human accuracy compared to standard conformal prediction sets. \cite{Wang2022ImprovingSP}  study calibrated subset selection, and propose a distribution-free procedure that improves upon standard calibration by producing near-optimal shortlists with a target expected number of qualified candidates. 
These works study how humans use AI-generated sets to reach final judgments—the human is the end decision-maker receiving a single set. Our framework, following \cite{noorani2025human}, incorporated the human input and treats the prediction set itself as the collaborative output where the human proposes a set, and the AI refines it. Moreover, these works analyze single interactions per instance, leaving open how to handle iterative refinement across multiple rounds where the human may update beliefs, ask clarifying questions, or revise proposals.

\paragraph{Agreement Protocols}
Agreement protocols formalize multi-round collaboration as iterative belief exchange until consensus. The foundational result is Aumann's agreement theorem \citep{aumann1976agreeing}, which states that two perfectly informed Bayesians with a common prior, different observations, and common knowledge of each other’s posteriors must share the same posterior. Subsequent work has given finite-time convergence protocols through which the different parties engage in a conversation about their beliefs about the outcome \citep{GEANAKOPLOS1982192,aaronson2004complexityagreement}. Recent work \citep{natalie-tractable-agreement-protocols} generalize prior results and introduce tractable agreement protocols that replace Bayesian rational, common assumption in earlier formulations, with statistically efficient calibration condition that enables agreement theorems without distributional assumptions. \cite{nataliecollaborativeprediction} extend upon these protocols and achieve information aggregation as well as in the setting where the two agents have asymmetric information. 

Our framework differs from agreement protocols in both interface and objective. First, these protocols require both agents to maintain and communicate probabilistic beliefs or expected values; our setting requires only that the human provide a prediction set, with no probabilities attached. Second, agreement protocols focus on belief convergence and information aggregation—the goal is for agents to reach consensus. We prioritize different objectives: ensuring the AI's output does not harm correct human judgments (counterfactual harm) while complementing incorrect ones (complementarity). These objectives do not reduce to reaching consensus and require fundamentally different algorithmic machinery.

\paragraph{Learning to Defer}
Another related area is "Learning to Defer (L2D)", where an algorithmic tool learns when to defer a final decision to an expert (human). Early work by \cite{madras2018predictresponsiblyimprovingfairness} framed this as a mixture-of-experts problem, jointly training a classifier with a deferral mechanism. \cite{mozannar2021consistentestimatorslearningdefer} extended this with a decision-theoretic formulation, training models to complement human strengths rather than maximize accuracy alone. \cite{verma2022calibratedlearningdeferonevsall} showed that standard training objectives can fail to produce optimal deferral policies and proposed consistent surrogate losses guaranteeing Bayes-optimal deferral.
\cite{raghu2019algorithmicautomationproblemprediction} frame automation as an instance-level triage problem and optimally assigning cases to humans vs.\ algorithms based on estimates of their respective errors.
Extensions address various settings: \cite{hemmer2023learningdeferlimitedexpert, keswani2021unbiasedaccuratedeferralmultiple} study deferral with multiple experts, while \cite{wilder-2021-learing-to-complement-humans} emphasize that humans and models are not independent and introduce dependent Bayes optimality to exploit correlations. \cite{okati2021differentiablelearningtriage} formulate differentiable learning under triage with exact optimality guarantees. \cite{charusaie2022sampleefficientlearningpredictors} propose training-free deferral using conformal prediction to allocate decisions among experts. Most recently, \cite{arnaizrodriguez2025humanaicomplementaritymatchingtasks} introduce collaborative matching for selective deferral.

The L2D literature primarily asks who should decide on a given instance: the model or the human. These methods improve performance through selective abstention, which is fundamentally different from our objective. We do not focus on task allocation; instead, we focus on joint decision-making. From a human-centric perspective, our work addresses how the AI should incorporate human feedback to build a prediction set that reflects the strengths of both parties. While L2D determines whether a collaboration should occur, our framework governs how that collaboration proceeds once it has begun unfolding over multiple rounds.

\paragraph{Theoretical Frameworks for Human-AI Complementarity}
The concept of human-AI complementarity—where a joint system surpasses the capabilities of either agent in isolation—has been a central focus of recent literature. \cite{bansal2021accurateaibestteammate} provided a formal grounding for this synergy, investigating the specific team dynamics that enable AI to add value beyond standalone accuracy. Other researchers have approached this through a Bayesian lens, identifying the criteria under which collective belief updating leads to superior outcomes \citep{steyvers}. This has led to broad taxonomies that categorize how human cognitive intuition and machine learning precision can be optimally fused across different collaborative tasks \citep{rastogi2023taxonomyhumanmlstrengths}. Despite these theoretical frameworks, empirical results suggest that achieving such synergy is non-trivial. \cite{vaccaro2024combinations} highlights a persistent gap where human-AI dyads often fail to outperform the more capable individual agent, even though they typically exceed baseline human performance.

%\paragraph{Algorithmic Influence and Sequential Interaction}
A second research thread examines how algorithms actively shape human choices through ranking and recommendations. While \cite{cowgill-stevenson} analyze the strategic impact of recommendations on decision-makers, others have utilized the Mallows model to study the interplay between algorithmic outputs and human preferences \citep{donahue2024listsbetteronebenefits, Kleinberg_2021}.

More recently, human-AI collaboration has been framed as a sequential learning problem. \cite{chan2019assistivemultiarmedbandit} introduced the "assistive multi-armed bandit," where a robot learns to aid a human agent who is themselves discovering task rewards. Similar "two-player" perspectives are explored by \cite{bordt-and-vonluxburg}, who emphasize the challenges of private information and opacity in shared decision tasks. In recommendation systems, work by \cite{agarwal2022diversifiedrecommendationsagentsadaptive} and \cite{yao2023badtopkrecommendationcompeting} explores how menu selection and creator competition influence users with adaptive preferences or random utility models. Furthermore, \cite{Tian2023} and \cite{gao2025humancenteredhumanaicollaborationhchac} represent a shift toward modeling the internal dynamics of human learning and human-centered collaboration, treating the human as a teammate whose mental state evolves over time.

Our works departs from the above paradigms in three fundamental ways:
(i) Existing models typically rely on parametric assumptions—such as Bayesian updates, Mallows noise, or specific internal mental models. In contrast, we treat the human as a complete black box. We make no assumptions about the human’s rationality, learning speed, or internal state. (ii) our guarantees are distribution-free. Using online calibration, we ensure that our collaboration constraints remain valid even under arbitrary, non-stationary shifts in human behavior or problem distributions. (iii) Lastly, different from prior works that focus on reconciling symmetric agents, we adopt an asymmetric, human-centric view. Our framework defines complementarity and "counterfactual harm" through set-based outcomes: the AI's goal is to recover what the human misses without degrading what the human already knows. This formulation allows us to provide rigorous, finite-sample guarantees on the reliability of the interaction.

A more distantly related line of work focuses on incentivized exploration. Here, the AI acts as an information designer, strategically revealing or hiding data to steer humans toward better actions. This includes limiting the AI's exploration so users don't get frustrated and quit the platform \citep{bastani-learning-personalized}, or providing incentives to convince short-sighted humans to make fairer, less biased decisions \citep{kannan2017fairnessincentivesmyopicagents}. Other researchers explore how to selectively reveal information to encourage humans to try new options \citep{immorlica2024incentivizingexplorationselectivedata, hu2022incentivizingcombinatorialbanditexploration}.
While these studies provide valuable insights into human-AI dynamics, our approach is built on a different set of priorities. Instead of managing a population of users by strategically revealing information to new people at each step, we focus on an ongoing dialogue over multiple rounds. Our goal isn't to manipulate what a person sees or direct them toward a choice, but rather to construct a reliable, evolving interaction.

%These works differ from ours in their reliance on learning from data and modeling assumptions about human behavior. Even when human behavior is modeled as non-stationary (as in Tian et al.), specific parametric assumptions are imposed. Our framework makes no assumptions on human behavior—the human is treated as a complete black box—and provides distribution-free guarantees valid under arbitrary behavioral changes.\textcolor{red}{Mention we do calibration.}

%\textcolor{red}{Unify below paragraph for agreement protocols and above sentence for differing with the bandits, and write the differing with respect to both categories.I grouped them because they are theoretical and algorithmic frameworks for human ai collaboration}
%Our work differs from these approaches in three ways. First, we treat the human as a complete black box, requiring no parametric assumptions on behavior such as Bayesian updates or Mallows noise. Second, our guarantees are distribution-free and remain valid under arbitrary shifts in human behavior. Third, our definition of complementarity is set-based rather than accuracy-based: we require that the collaborative prediction set recovers outcomes the human initially missed, while simultaneously avoiding counterfactual harm. This formulation enables principled guarantees with explicit tradeoffs between coverage and set size.

%\paragraph{Bandit Settings for Human-AI Collaboration}

\paragraph{Human Reliance on Algorithmic Predictions}
A substantial literature studies how humans actually use algorithmic predictions, and understanding how the timing and presentation of AI outputs influence human judgment. \cite{De_Arteaga_2020} empirically show that human practitioners can often identify and override erroneous algorithmic scores in high-stakes settings, emphasizing the critical importance of maintaining human autonomy. In a similar vein, \cite{benz2025humanalignmentinfluencesutilityaiassisted} examine how the presentation of confidence metrics impacts a user’s inclination to depend on AI predictions, finding that the perceived alignment between the human and the machine influences the utility of the tool. \cite{mclaughlin2025algorithmicassistancerecommendationdependentpreferences} argue that algorithmic suggestions often function as behavioral defaults, where the perceived cost of deviating from a recommendation fundamentally shifts human preferences. This anchoring effect is further detailed by \cite{fogliato2022goesfirstinfluenceshumanai}, who found that the specific sequence of interaction—whether AI results are displayed before or after a provisional human judgment—significantly alters diagnostic agreement and perceived utility in clinical radiology. To mitigate such overreliance, \cite{vasconcelos2023explanationsreduceoverrelianceai} propose a cost-benefit framework illustrating that explanations can curb anchoring if they sufficiently lower the cognitive effort required for a human to verify the AI's logic.

\cite{chen2023understandingrolehumanintuition} categorize how internal heuristics about task outcomes and machine limitations determine whether a user successfully applies an explanation to override a false prediction. Beyond the content of the AI’s output, the structural protocol of the interaction itself plays a critical role, as evidenced by the study of double-reading workflows by \cite{cabitza2021kasparovs}. In the domain of software engineering, \cite{mozannar2024suggestionintegratinghumanfeedback} utilize a utility-theoretic approach to selectively withhold code suggestions that have a high likelihood of rejection, thereby reducing the verification burden on the programmer and improving overall efficiency.

This empirical body of work is complementary to ours. While these studies investigate how humans actually behave when using AI, we focus on whether counterfactual harm and complementarity are the correct objectives to regulate in multi-round interactions. We design an algorithm to strictly enforce these objectives without making any assumptions about the human's internal logic, and show that by controlling these AI-side metrics, we can indirectly guide the collaboration toward better outcomes, even though we treat the human as a black box and exert no direct control over their choices.

\section{Crowdsourcing Study Details and Interface}
\label{app:crowdsourcing-study-details}
\label{app:crowdsourcing}

This section provides additional details on the Collaborative Shape Counting crowdsourcing experiment described in Section~\ref{subsec:crowdsourcing-experiment}, including the recruitment procedure, task interface, and image generation methodology.

\subsection{Participant Recruitment and Study Protocol}
We conducted our crowdsourcing study through Prolific\footnote{https://www.prolific.com/}, an online platform for recruiting research participants. A total of 50 participants were recruited, with participants assigned to each of the three algorithm conditions (A, B, and C) randomly. Participants were required to be fluent in English and have a minimum approval rate of 95\% on prior Prolific studies.

\paragraph{Sequential Participation Protocol.} A critical aspect of our experimental design is that participants were required to complete the study \emph{sequentially}. We configured the Prolific study to allow only one active participant at any given time. This constraint is essential for maintaining the validity of our online threshold updates: because our algorithm adapts its confidence thresholds based on cumulative error rates across all prior interactions, concurrent participation would introduce race conditions and potentially violate the sequential guarantees of our framework. Once a participant completed all assigned 20 trials, the next participant in the queue was permitted to begin.

\paragraph{Algorithm Assignment.} Upon starting the study, each participant was randomly assigned to one of three algorithm configurations using a round-robin assignment scheme tracked via a Redis database.\footnote{\url{https://redis.io}} This ensured balanced allocation across conditions while maintaining the sequential integrity of each algorithm's threshold state. The assignment was performed server-side, and participants were unaware of which algorithm variant they were interacting with. Each algorithm instance maintained its own global state, meaning all participants assigned to Algorithm A contributed to a single, continuous stream of interactions with shared adaptive thresholds.

\paragraph{Compensation.} Participants were compensated at a rate of \$12 USD per hour, with the median completion time of approximately 15 minutes for the full 20-trial session.

\paragraph{Task Description.} Figure~\ref{fig:task-description} shows the task instructions presented to participants at the beginning of the study. The instructions explain the collaborative nature of the task, the structure of each trial (two rounds of interaction with the AI assistant), and the mechanics of submitting guess ranges.

\begin{figure}[h]
    \centering
    \includegraphics[width=0.5\linewidth]{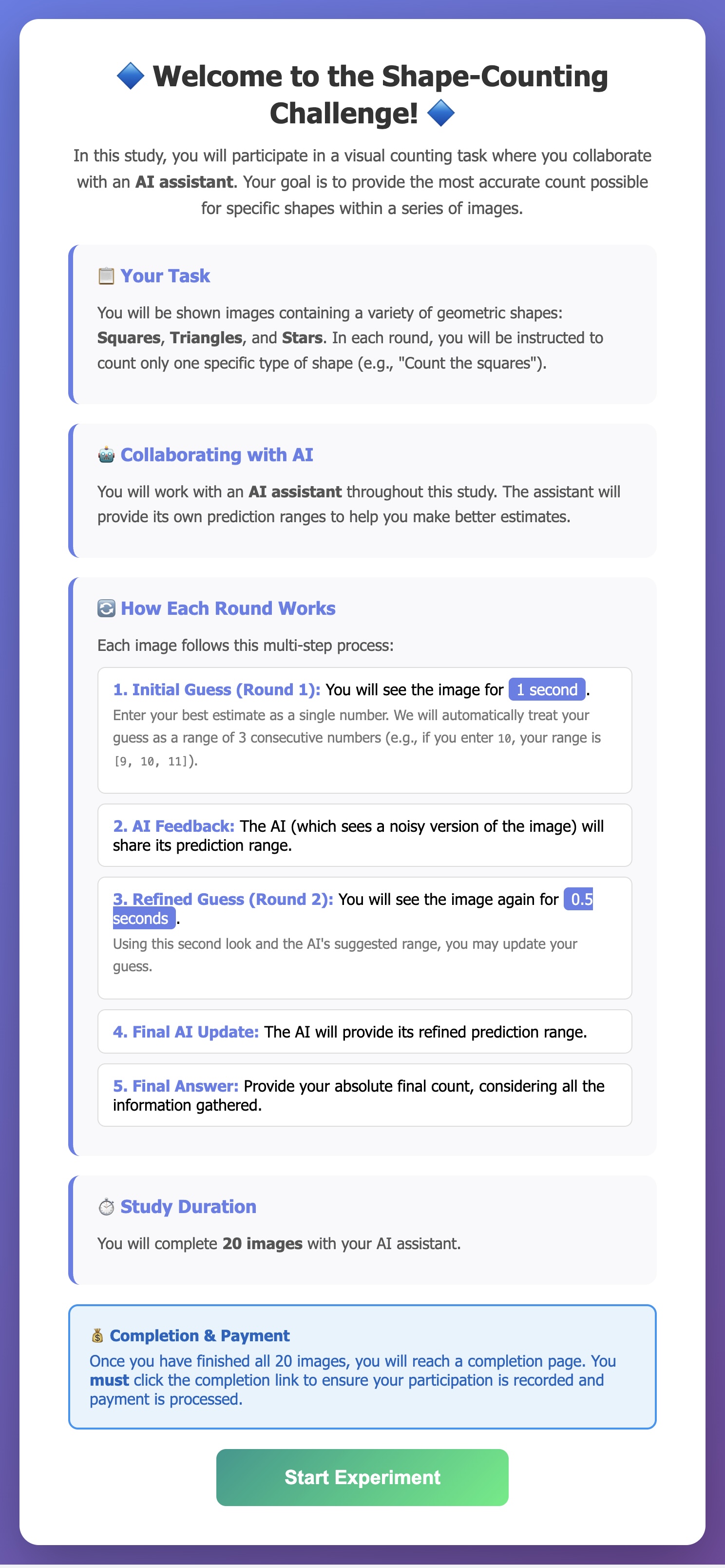}
    \caption{\textbf{Task instructions presented to participants at the start of the crowdsourcing study.} The instructions describe the Collaborative Shape Counting Task, explain the two-round interaction structure, and clarify how to interpret and respond to the AI assistant's predictions.}
    \label{fig:task-description}
\end{figure}

\subsection{Image Generation and Difficulty Levels}

We generated a dataset of synthetic images containing three types of shapes: triangles, squares, and stars. Each image was procedurally generated with varying numbers of shapes distributed across the canvas with randomized positions, sizes, and colors. The use of synthetic images allowed precise control over ground-truth counts and enabled systematic manipulation of task difficulty. Images were organized along two dimensions: \emph{bucket} (based on the total number of shapes) and \emph{difficulty} (based on visual complexity). Specifically:

\begin{itemize}
    \item \textbf{Bucket:} Images were divided into three buckets based on the number of distinct shape types present in the image:
    \begin{itemize}
        \item Bucket 1: Images containing only one shape type (e.g., only triangles)
        \item Bucket 2: Images containing two distinct shape types (e.g., triangles and squares)
        \item Bucket 3: Images containing all three shape types (triangles, squares, and stars)
    \end{itemize}
    \item \textbf{Difficulty:} Within each bucket, images were further classified into three difficulty levels based on the total number of shapes present:
    \begin{itemize}
        \item Easy: Low total shape count (fewer objects to track)
        \item Medium: Moderate total shape count
        \item Hard: High total shape count (many objects competing for attention)
    \end{itemize}
\end{itemize}
Figure~\ref{fig:app-sample-images} presents representative examples from each bucket-difficulty combination, illustrating the range of visual complexity in our dataset.

\begin{figure}[ht]
    \centering
    \includegraphics[width=0.9\linewidth]{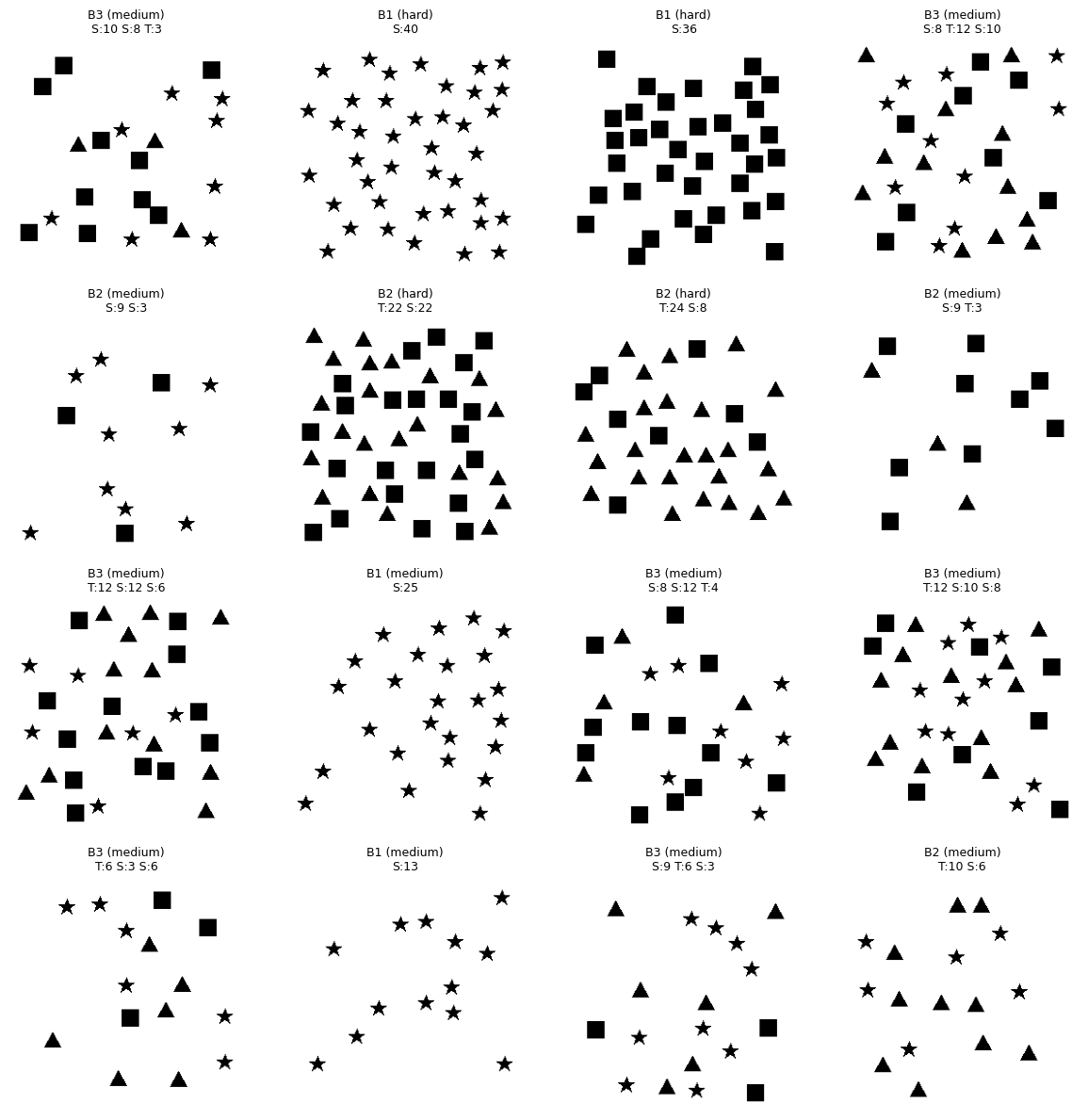}
    \caption{Sample images from the shape counting dataset, organized by bucket (rows) and difficulty (columns). Bucket 1 contains images with low shape counts (3--15), Bucket 2 contains medium counts (16--30), and Bucket 3 contains high counts (31--50). Difficulty increases from left to right, with ``Hard'' images featuring greater shape density and visual overlap.}
    \label{fig:app-sample-images}
\end{figure}

To create information asymmetry between the human and AI participants, the AI assistant, powered by Gemini 2.0 Flash-Lite~\citep{geminiteam2024gemini}, received a degraded version of each image. Specifically, we applied a ``salt-and-pepper'' noise filter at an extreme intensity level, which randomly replaces a substantial fraction of pixels with either black or white values. This degradation ensures that the AI's probability estimates are imperfect, creating a realistic scenario where neither the human nor the AI has complete information. Figure~\ref{fig:noisy-images} shows examples of the noisy images provided to the AI alongside their original counterparts.

\begin{figure}[htp]
    \centering
    \includegraphics[width=\linewidth]{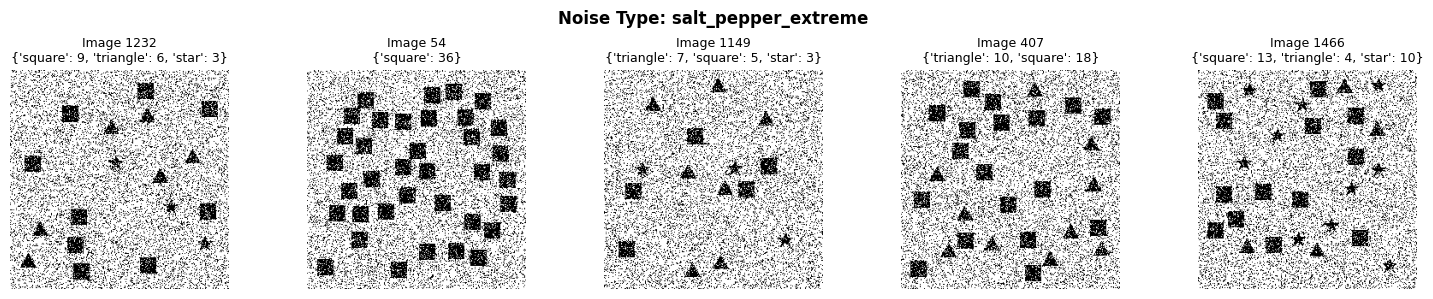}
    \caption{Sample visualization of noise-degraded images versions (bottom row) used as input for the AI assistant. The extreme salt-and-pepper noise obscures fine details, ensuring the AI's predictions are uncertain and creating a meaningful collaboration dynamic where both human and AI contributions are valuable.}
    \label{fig:noisy-images}
\end{figure}

\subsection{User Interface}

The study was conducted through a custom web application. Below we describe the key interface states encountered during each trial. After the initial image viewing period (1 second in round 1, 0.5 seconds in round 2), the image is hidden and participants are prompted to enter their guess. Figure~\ref{fig:ui-intermediate} shows this intermediate state, where participants can see the AI's prediction set from the current round and must submit their own three-integer range estimate. The interface displays the current round number, the target shape type, and the AI's suggested range.
Upon submitting their final guess, participants are shown a summary of the complete interaction. Figure~\ref{fig:ui-final} displays this completion state, which reveals: (1) the original image for reference, (2) the full history of both rounds including the human's guesses and the AI's prediction sets, (3) the ground-truth count, and (4) whether the participant's final guess contained the correct answer. This feedback helps participants calibrate their performance across trials.

\begin{figure}[h]
    \centering
    \includegraphics[width=\linewidth]{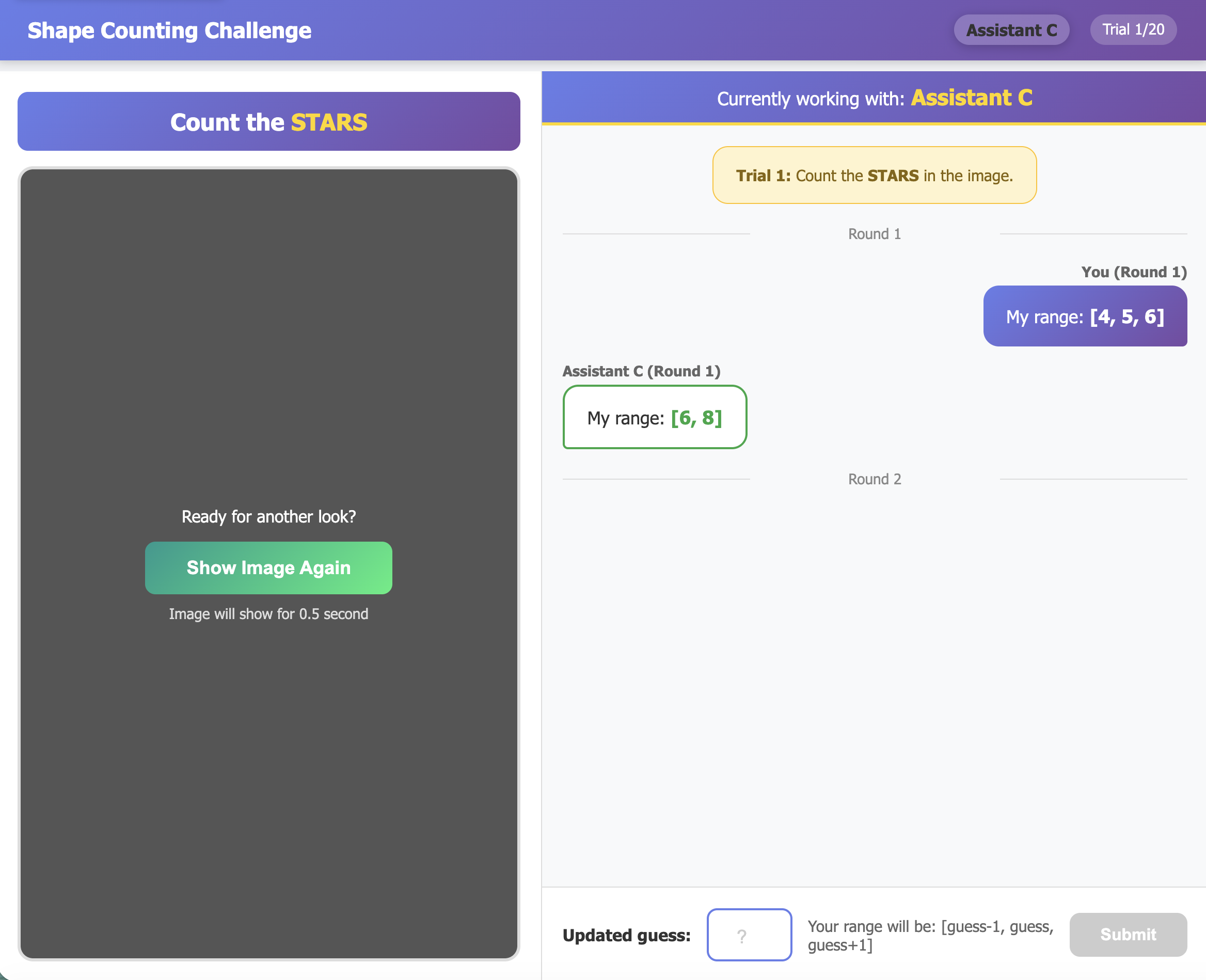}
    \caption{Intermediate interaction state of the user interface. The image is no longer visible, and the participant must submit a guess (a contiguous range of three integers) after observing the AI assistant's prediction set. The interface shows the target shape type, current round, and the AI's suggested range.}
    \label{fig:ui-intermediate}
\end{figure}

\begin{figure}[h]
    \centering
    \includegraphics[width=\linewidth]{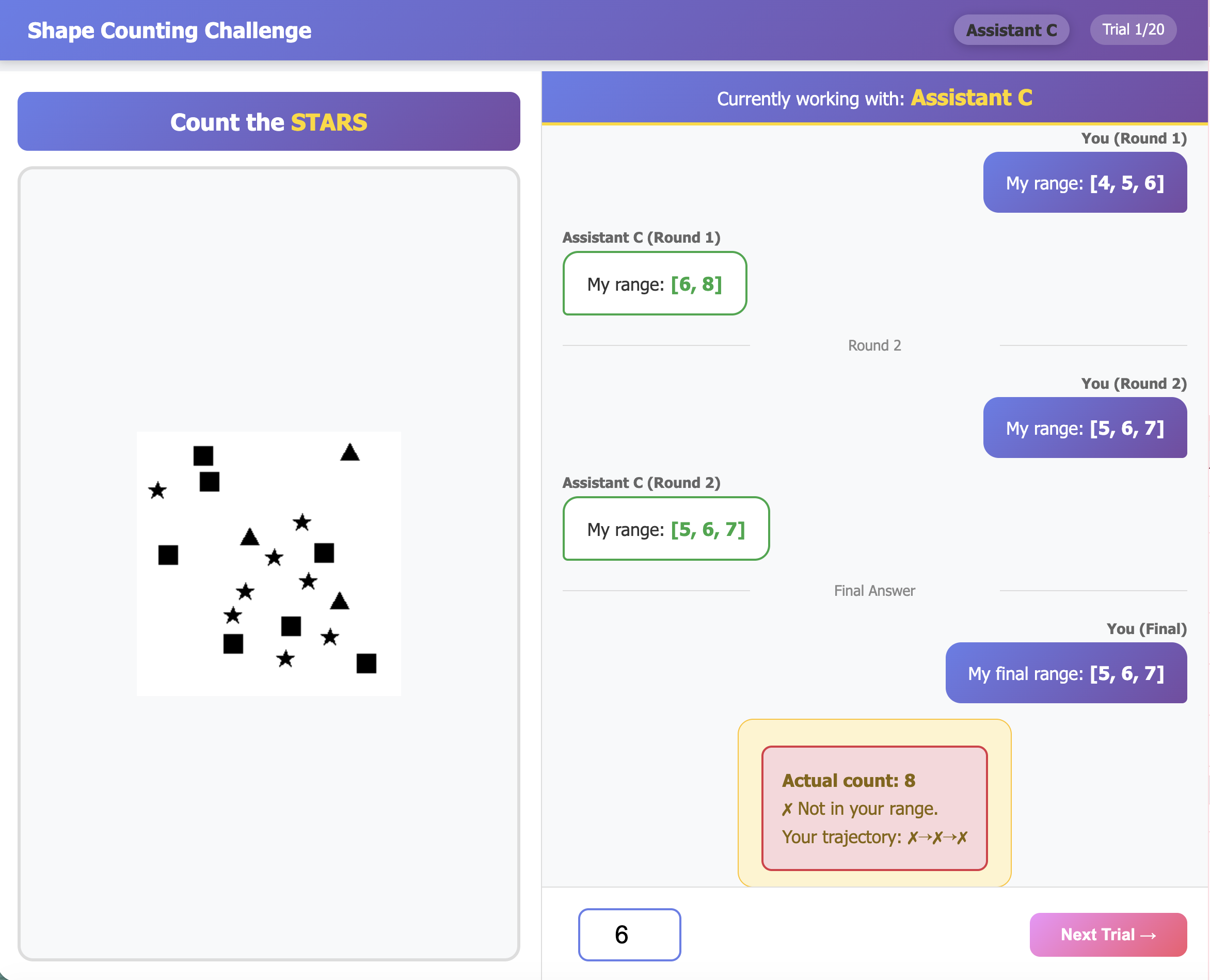}
    \caption{Trial completion state showing the full interaction history. The interface reveals the original image, displays both rounds of human-AI interaction (including all guesses and AI prediction sets), and indicates whether the participant's final answer contained the ground truth. This feedback screen appears after each of the 20 trials.}
    \label{fig:ui-final}
\end{figure}

To ensure data quality, we implemented several safeguards:
(i) \textbf{Attention checks:} Participants who submitted identical guesses across more than 80\% of trials were flagged for review. (ii) \textbf{Response time filtering:} Trials with response times below 500ms (indicating random clicking) were excluded from analysis. Lastly (iii) \textbf{Completion requirement:} Only participants who completed all 20 trials were included in the final analysis.

\section{LLM-Simulated Experiment Details}
\label{app:llm-simulated-exp-details}
\label{app:llm_sim_details}
% ======================================================================
% Appendix B: LLM-Simulated Experiment Details
% ======================================================================
In this section we report the exact prompt templates used to the the LLM-simulated medical diagnosis task for reproducibility. Our setup instantiates two agents.
(i) a \emph{Human} agent (GPT-4o-mini): The Human agent only observes patient demographics and chief complaint / initial symptoms.
(ii) an \emph{AI} agent (DeepSeek-Chat). Importantly, the AI does not observe the patient demographic information. 
Both agents output \textbf{JSON only}.

\paragraph{Prompt Templates}
As for the notation clarity, we use the following placeholders inside templates:
\begin{itemize}
    \item $\{\{ \texttt{LABEL\_SPACE} \}\}$: the full diagnosis label list (size $|\mathcal{Y}|$).
    \item $\{\{ \texttt{HUMAN\_INFO} \}\}$: demographics + chief complaint/initial symptoms (what the Human can see).
    \item $\{\{ \texttt{AI\_INFO} \}\}$: full clinical picture (what only the AI can see).
    \item $\{\{ \texttt{ROUND} \}\}$: current round number.
    \item $\{\{ \texttt{HISTORY} \}\}$: truncated conversation history (most recent turns).
    \item $\{\{ \texttt{HUMAN\_SET} \}\}$: Human's current differential set (from their top-2).
    \item $\{\{ \texttt{AI\_SET} \}\}$: AI's current prediction set.
\end{itemize}

% -------------------------------
% AI SYSTEM PROMPT
% -------------------------------
\begin{promptbox}{AI Agent (DeepSeek-Chat) --- System Prompt}
You are an AI clinical decision support system helping with differential diagnosis. \\
\\
\textbf{Possible Diagnoses ($|\mathcal{Y}|$ conditions)} \\
\{\{LABEL\_SPACE\}\} \\
\\
\textbf{Your Information} \\
You have access to the some clinical evidences including symptoms, physical exam findings, and test results.\\
The human (physician) can see the patient's age, sex, and chief complaint with initial symptoms. \\
\\
\textbf{Return Format (JSON only)} \\
\{ \\
\hspace*{1em}"reasoning": "<1--2 sentences about your clinical reasoning>", \\
\hspace*{1em}"probabilities": [\{"diagnosis": "<diagnosis>", "prob": <0--1>\}, ...], \\
\} \\
\\
\textbf{IMPORTANT:} \\
- The "diagnosis" field MUST be one of the exact names from the list above. \\
- Include ALL diagnoses with meaningful probability ($>1\%$). \\
- The probabilities SHOULD sum to approximately 1.0. \\
- Use clinical evidence to narrow the differential.
\end{promptbox}
% -------------------------------
% HUMAN SYSTEM PROMPT (ROUND 1)
% -------------------------------
\begin{promptbox}{Human Agent (GPT-4o-mini) --- System Prompt (Round 1: Initial Assessment)}
You are a physician seeing a patient for the first time. \\
\\
\textbf{Possible Diagnoses ($|\mathcal{Y}|$ conditions)} \\
\{\{LABEL\_SPACE\}\} \\
\\
\textbf{Your Information} \\
You can ONLY see the patient's demographics and chief complaint/initial symptoms. \\
You do NOT have access to detailed clinical findings, exam results, or tests. \\
\\
\textbf{Your Task (Round 1 --- Initial Assessment)} \\
Based ONLY on the initial presentation, provide your top 2 most likely diagnoses. \\
\\
\textbf{Return JSON:} \\
\{ \\
\hspace*{1em}"reasoning": "<Your clinical reasoning based on demographics and chief complaint>", \\
\hspace*{1em}"top\_2\_diagnoses": ["<most\_likely>", "<second\_likely>"] \\
\} \\
\\
\textbf{RULES:} \\
1. top\_2\_diagnoses MUST contain EXACTLY 2 items \\
2. Each item MUST be a diagnosis name COPIED EXACTLY from the list above \\
3. Use your clinical judgment based on the information available
\end{promptbox}

% -------------------------------
% HUMAN SYSTEM PROMPT (ROUNDS >= 2)
% -------------------------------
\begin{promptbox}{Human Agent (GPT-4o-mini) --- System Prompt (Rounds $t\ge 2$: Collaborative Updates)}
You are a physician collaborating with an AI clinical decision support system. \\
\\
\textbf{Possible Diagnoses ($|\mathcal{Y}|$ conditions)} \\
\{\{LABEL\_SPACE\}\} \\
\\
\textbf{Your Information} \\
- You can see the patient's demographics and chief complaint \\
- The AI has access to detailed clinical findings you cannot see \\
\\
\textbf{COLLABORATION RULES:} \\
\\
\textbf{1. THINK ABOUT THE AI'S CLINICAL EVIDENCE} \\
- The AI has physical exam findings, test results, and detailed symptoms \\
- If the AI suggests a diagnosis you didn't consider, the AI likely has clinical evidence suggesting the diagnosis.\\
- The AI does not have access to the patient demographics, his judgments are based on complementary evidence you cannot observe. \\
\\
\textbf{2. UPDATE YOUR ASSESSMENT EACH ROUND} \\
- If the AI suggests diagnoses you didn't consider, add them to your set if you believe the diagnosis is likely. \\
- DO NOT stubbornly repeat your previous answer. Reevaluate each round based on whether the AI is disagreeing or agreeing with your proposal. \\
\\
\textbf{3. STOPPING RULES --- ONLY stop (should\_continue=false) if:} \\
- Your LIKELY diagnosis appears in the AI's set ( You agree with AI ) AND \\
- The AI's set is reasonably small given the inherent uncertainty in the case. \\
\\
\textbf{4. MUST CONTINUE (should\_continue=true) if:} \\
- Your you DO NOT believe the AI's set contains the true diagnosis.\\
- The AI's set is too large to be helpful for a clinical assessment.\\
\\
\textbf{Return JSON:} \\
\{ \\
\hspace*{1em}"reasoning": "<How does the AI's suggestion change your thinking?>", \\
\hspace*{1em}"top\_2\_diagnoses": ["<most\_likely>", "<second\_likely>"], \\
\hspace*{1em}"confidence": <0.0--1.0>, \\
\hspace*{1em}"ai\_set\_analysis": "<What clinical findings might the AI have seen?>", \\
\hspace*{1em}"should\_continue": true or false, \\
\hspace*{1em}"why\_stop\_or\_continue": "<If confident the AI set contains the true diagnosis, stop; if not confident, continue and explain what is missing>" \\
\} \\
\\
\textbf{CRITICAL RULES:} \\
- top\_2\_diagnoses MUST contain exactly 2 DIAGNOSIS NAMES from the list \\
- You MUST reassess your top\_2\_diagnoses each round based on the conversation, available information, and your belief.  \\
\end{promptbox}

\paragraph{User-Message Context Templates}
In addition to system prompts, each agent receives a structured \emph{user message} that (i) provides the visible patient context and (ii) carries forward the collaboration state (sets and limited recent history). We include these templates for completeness.
% -------------------------------
% HUMAN CONTEXT (ROUND 1)
% -------------------------------
\begin{promptbox}{Human Agent --- User Message (Round 1 Context)}
\textbf{Patient Presentation} \\
\\
\{\{HUMAN\_INFO\}\} \\
\\
\textbf{Round 1 --- Initial Assessment} \\
Based ONLY on the information above (demographics and chief complaint), provide your top 2 most likely diagnoses. \\
\\
\textbf{Note:} You do NOT have access to detailed clinical findings, physical exam results, or additional symptoms.
\end{promptbox}

% -------------------------------
% AI CONTEXT (ROUND t)
% -------------------------------
\begin{promptbox}{AI Agent --- User Message (Round \{\{ROUND\}\} Context)}
\textbf{Patient Information} \\
\\
\textbf{The Physician has access to demographics information you cannot see!} \\
\{\{HUMAN\_INFO\}\} \\
\\
\textbf{Clinical Evidence Available (Only YOU can see this):} \\
\{\{AI\_INFO\}\} \\
\\
\textbf{Round \{\{ROUND\}\}} \\
Physician's current differential: \{\{\{HUMAN\_SET\}\}\} \\
\\
Use the clinical evidence available to provide an accurate differential diagnosis. \\
The physician can only see demographics and the chief complaint --- you can see complementary clinical evidence. \\
\\
\textbf{Conversation History (most recent turns):} \\
\{\{HISTORY\}\}
\end{promptbox}

% -------------------------------
% HUMAN CONTEXT (ROUND t>=2)
% -------------------------------
\begin{promptbox}{Human Agent --- User Message (Round \{\{ROUND\}\} Context)}
\textbf{Patient Information (What YOU can see)} \\
\\
\{\{HUMAN\_INFO\}\} \\
\\
\textbf{Round \{\{ROUND\}\} --- AI's Differential ($|\text{AI\_SET}|$ diagnoses):} \\
The AI has access to the COMPLEMENTARY clinical evidence (all symptoms, physical exam findings, etc.) that you cannot see. \\
Based on that clinical evidence, the AI suggests these diagnoses: \\
\{\{AI\_SET\}\} \\
\\
\textbf{Recent History (most recent turns):} \\
\{\{HISTORY\}\} \\
\\
\textbf{IMPORTANT:} \\
1. The AI has clinical findings you don't have --- their suggestions are INFORMED by complementary evidence \\
2. If the AI suggests a diagnosis you didn't expect, they likely believe they have supporting symptoms/signs \\
\\
\textbf{STOPPING CHECK:} \\
- Do you believe the AI's set contains the TRUE diagnosis? \\
- If YES and AI set size is reasonably small: You may stop (should\_continue=false) \\
- If NO: You MUST continue (should\_continue=true)
\end{promptbox}

\paragraph{Medical Task Context: DDXPlus Case Construction and Interaction Protocol:}
We build each episode of interaction from a single DDXPlus patient record, mapping the case to a fixed diagnosis label space $\mathcal{Y}$ and enforcing the information asymmetry described above. Each episode proceeds according to the following multi-round protocol:
\begin{enumerate}
\item \textbf{Round 1 (Human initial):} The human agent receives demographics and initial symptoms and outputs an initial top-2 diagnosis set.
\item \textbf{Round $t$ (AI response):} The AI assistant outputs a full probability distribution over all diagnoses. To mitigate potential LLM hallucinations and ensure a valid distribution, we post-process and normalize these probabilities such that they sum to 1. The conformal algorithm then utilizes these normalized probabilities to construct a prediction set $\mathcal{C}_t$ based on the current thresholds for counterfactual harm and complementarity.
\item \textbf{Round $t\!+\!1$ (Human update):} The human agent reviews the AI's prediction set, updates their top-2 differential, and decides whether to terminate the session based on the pre-defined stopping rules.
\item \textbf{Termination and Update:} The interaction continues until the human agent decides to stop or the interaction reaches a maximum of $MAX\_ROUNDS=6$. Once the session concludes, the ground truth (GT) for the case is revealed. The online adaptive algorithm then observes whether a CH or COMP error occurred during the session and updates the corresponding thresholds to maintain the target error rates for the next episode.
\end{enumerate}

\section{Extended Experimental Results}
\label{app:extended-experimental-results}
\subsection{Extended Crowdsourcing Convergence Results:}
\label{subsec:app-extended-crowdsource-convergence}
We present the empirical convergence of the adaptive thresholds for Algorithms A and C in the Collaborative Shape Counting task. As demonstrated in Figure~\ref{fig:cs_app_convergence}, the online algorithm successfully regulates the target error rates across different configurations. The stable convergence of the cumulative average running error to the pre-specified $\varepsilon$ targets confirms the robustness of the adaptive mechanism in real-world human-AI interactions.

\begin{figure}[h]
    \centering
    % Top Row: Algorithm A
    \begin{subfigure}[t]{0.48\linewidth}
        \centering
        \includegraphics[width=\linewidth]{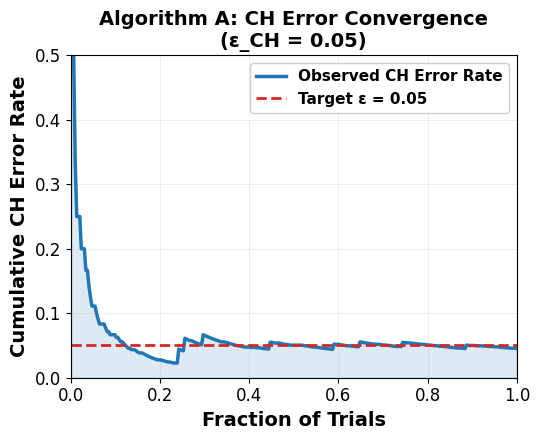}
        \caption{Alg A: CH ($\varepsilon_{\text{CH}}=0.05$)}
    \end{subfigure}\hfill
    \begin{subfigure}[t]{0.48\linewidth}
        \centering
        \includegraphics[width=\linewidth]{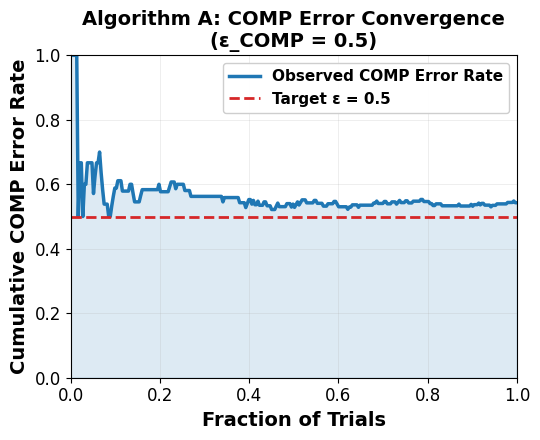}
        \caption{Alg A: COMP ($\varepsilon_{\text{COMP}}=0.5$)}
    \end{subfigure}
    
    \vspace{1em}
    
    % Bottom Row: Algorithm C
    \begin{subfigure}[t]{0.48\linewidth}
        \centering
        \includegraphics[width=\linewidth]{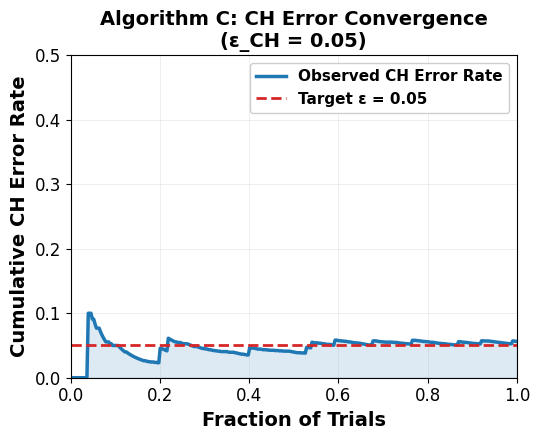}
        \caption{Alg C: CH ($\varepsilon_{\text{CH}}=0.05$)}
    \end{subfigure}\hfill
    \begin{subfigure}[t]{0.48\linewidth}
        \centering
        \includegraphics[width=\linewidth]{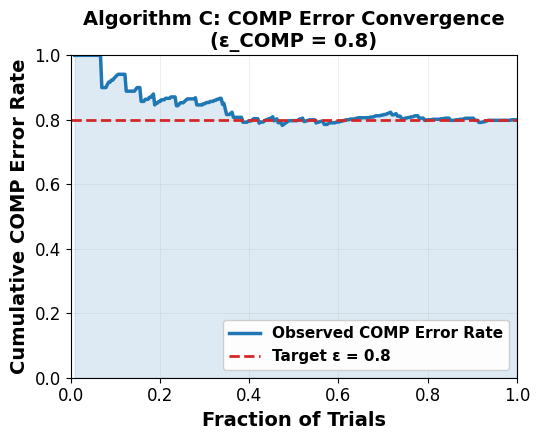}
        \caption{Alg C: COMP ($\varepsilon_{\text{COMP}}=0.7$)}
    \end{subfigure}
    
    \caption{Empirical convergence for Algorithms A and C in the crowdsourcing task. All plots track the cumulative average running error, $\text{AvgError}_t = \frac{1}{t} \sum_{i=1}^t \mathbf{1}\{\text{Error}_i\}$, for counterfactual harm (left) and complementarity (right) as they stabilize at their respective target values.}
    \label{fig:cs_app_convergence}
\end{figure}

\subsection{Extended LLM Simulation Convergence Results:}
\label{subsec:app-extended-llm-convergence}
For further validate the finite-sample guarantees in the medical diagnosis task, we provide convergence results for additional configurations of the target error rates.
%%%%%% Additional LLM Plots
\begin{figure}[h]
    \centering
    % Configuration 1: CH 0.05, COMP 0.7
    \begin{subfigure}[t]{0.48\linewidth}
        \centering
        \includegraphics[width=\linewidth]{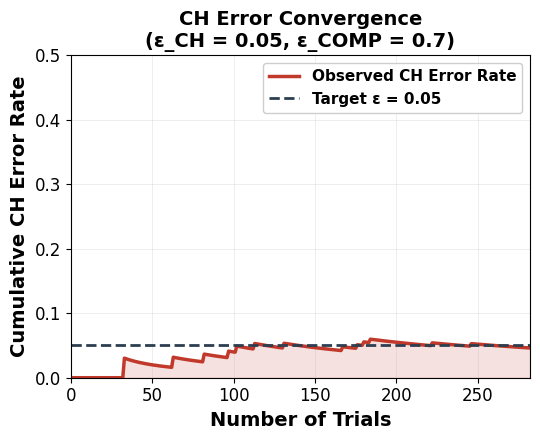}
        \caption{Config 1: CH ($\varepsilon_{\text{CH}}=0.05$)}
    \end{subfigure}\hfill
    \begin{subfigure}[t]{0.48\linewidth}
        \centering
        \includegraphics[width=\linewidth]{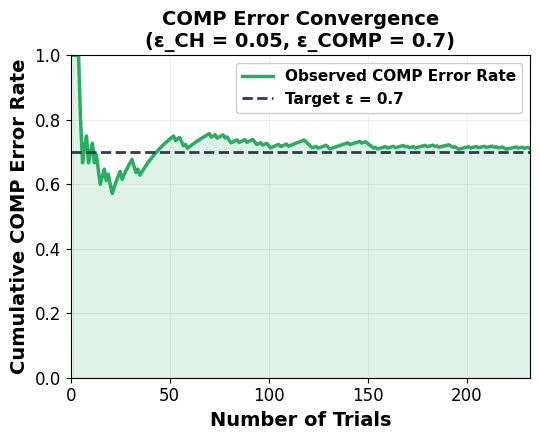}
        \caption{Config 1: COMP ($\varepsilon_{\text{COMP}}=0.7$)}
    \end{subfigure}
    
    \vspace{1em}
    
    % Configuration 2: CH 0.05, COMP 0.25
    \begin{subfigure}[t]{0.48\linewidth}
        \centering
        \includegraphics[width=\linewidth]{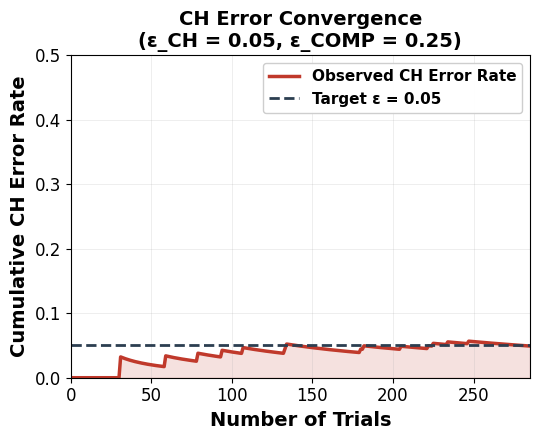}
        \caption{Config 2: CH ($\varepsilon_{\text{CH}}=0.05$)}
    \end{subfigure}\hfill
    \begin{subfigure}[t]{0.48\linewidth}
        \centering
        \includegraphics[width=\linewidth]{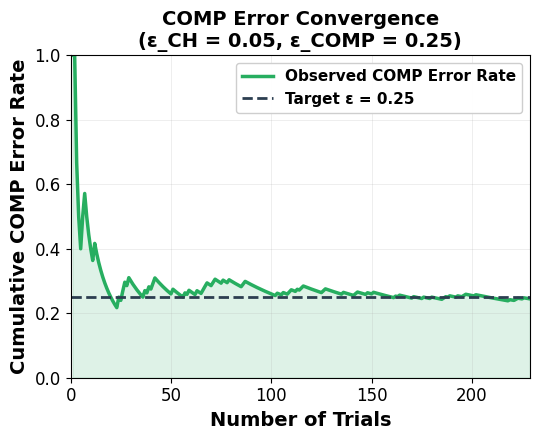}
        \caption{Config 2: COMP ($\varepsilon_{\text{COMP}}=0.25$)}
    \end{subfigure}
    
    \caption{Empirical convergence in the LLM-simulated medical task across varying configurations. Plots demonstrate the algorithm's ability to track $\text{AvgError}_t = \frac{1}{t} \sum_{i=1}^t \mathbf{1}\{\text{Error}_i\}$ for both CH and COMP metrics under different target regimes.}
    \label{fig:llm_app_convergence}
\end{figure}

\end{document}